\newtheorem{theorem}{Theorem}
\newtheorem{lemma}{Lemma}
\newtheorem{proposition}{Proposition}
\newtheorem{definition}{Definition}
\newcommand{\argmin}{\operatornamewithlimits{argmin}}
\begin{document}

%%%%%%%%% TITLE
\title{Parsimonious Labeling}

\author{Puneet K. Dokania\\
CentraleSupelec\\
INRIA Saclay\\
{\tt\small puneetkdokania@gmail.com}
% For a paper whose authors are all at the same institution,
% omit the following lines up until the closing ``}''.
% Additional authors and addresses can be added with ``\and'',
% just like the second author.
% To save space, use either the email address or home page, not both
\and
M. Pawan Kumar\\
CentraleSupelec\\
INRIA Saclay\\
{\tt\small pawan.kumar@ecp.fr}
}

\maketitle

% Include other packages here, before hyperref.

% If you comment hyperref and then uncomment it, you should delete
% egpaper.aux before re-running latex.  (Or just hit 'q' on the first latex
% run, let it finish, and you should be clear).
%\usepackage[breaklinks=true,bookmarks=false]{hyperref}
%
%\iccvfinalcopy % *** Uncomment this line for the final submission
%
%\def\iccvPaperID{352} % *** Enter the ICCV Paper ID here
%\def\httilde{\mbox{\tt\raisebox{-.5ex}{\symbol{126}}}}
%
%% Pages are numbered in submission mode, and unnumbered in camera-ready
%%\ificcvfinal\pagestyle{empty}\fi
%%\setcounter{page}{4321}
%\begin{document}
%
%%%%%%%%%% TITLE
%\title{Parsimonious Labeling}
%
%\author{Puneet K. Dokania\\
%CentraleSupelec\\
%INRIA Saclay\\
%{\tt\small puneet.kumar@inria.fr}
%% For a paper whose authors are all at the same institution,
%% omit the following lines up until the closing ``}''.
%% Additional authors and addresses can be added with ``\and'',
%% just like the second author.
%% To save space, use either the email address or home page, not both
%\and
%M. Pawan Kumar\\
%CentraleSupelec\\
%INRIA Saclay\\
%{\tt\small pawan.kumar@ecp.fr}
%}
%
%\maketitle
%\thispagestyle{empty}

%%%%%%%%% ABSTRACT
\begin{abstract}
We propose a new family of discrete energy minimization problems, which we call parsimonious labeling. Specifically, our
energy functional consists of unary potentials and high-order clique potentials. While the unary potentials are
arbitrary, the clique potentials are proportional to the {\em diversity} of set of the unique labels assigned to
the clique. Intuitively, our energy functional encourages the labeling to be parsimonious, that is, use as few labels
as possible. This in turn allows us to capture useful cues for important computer vision applications such as stereo
correspondence and image denoising. Furthermore, we propose an efficient
graph-cuts based algorithm for the parsimonious labeling problem that provides strong theoretical guarantees on the quality of the solution.
Our algorithm consists of three steps. First, we approximate a given diversity using a mixture of a novel hierarchical $P^n$ Potts model.
Second , we use a divide-and-conquer approach for each mixture component, where each subproblem is solved using an effficient
$\alpha$-expansion algorithm. This provides us with a small number of putative labelings, one for each mixture component.
Third, we choose the best putative labeling in terms of the energy value.
Using both sythetic and standard real datasets, we show that our algorithm significantly outperforms other graph-cuts based approaches.
\end{abstract}

%%%%%%%%% BODY TEXT
\section{Introduction}
The labeling problem provides an intuitive formulation for several problems in computer vision and related areas. Briefly, the labeling problem is defined using a
set of random variables, each of which can take a value from a finite and discrete label set. The assignment of values to all the variables is referred to as a
labeling. In order to quantatively distinguish between the large number of putative labelings, we are provided with an energy functional that maps a labeling to
a real number. The energy functional consists of two types of terms: (i) the unary potential, which depends on the label assigned to one random variable at a time; and
(ii) the clique potential, which depends on the labels assigned to a set of random variables.
The goal of the labeling problem is to obtain the labeling that minimizes the energy.

Perhaps a well-studied special case of the labeling problem is the metric labeling problem~\cite{Boykov01GraphCut,Kleinberg_Metric_1999}. Here, the unary potentials are arbitrary. However, the clique potentials are specified by a user-defined metric distance function of the label space. Specifically, the clique potentials satisfy the following two properties: (i) each clique potential depends on two random variables; and (ii) the value of the clique potential (also referred to as the pairwise potential) is proportional
to the metric distance between the labels assigned to the two random variables. Metric labeling has been used to formulate several problems in low-level computer
vision, where the random variables correspond to image pixels. In such scenarios, it is natural to encourage two random variables that correspond to two nearby pixels
in the image to take similar labels. However, by restricting the size of the cliques to two, metric labeling fails to capture more informative high-order cues. For
example, it cannot encourage an arbitrary sized set of similar pixels (such as pixels that define a homogeneous superpixel) to take similar labels. 

We propose a natural generalization of the metric labeling problem for high-order potentials, which we call parsimonious labeling. Similar to metric labeing, our
energy functional consists of arbitrary unary potentials. However, the clique potentials can be defined on any set of random variables, and their value depends on
the set of unique labels assigned to the random variables in the clique. In more detail, the clique potential is defined using the recently proposed notion of
a {\em diversity}~\cite{Bryant13Diversities}, which generalizes metric distance functions to all subsets of the label set. By minimizing the diversity, our energy functional encourages
the labeling to be parsimonious, that is, use as few labels as possible. This in turn allows us to capture useful cues for important low-level computer vision applications. %\textcolor{red}{Semantic segmentation -- grouping of labels}

In order to be practically useful, we require an computationally feasible solution for parsimonious labeling. To this end, we design a novel three step algorithm that uses 
an efficient graph cuts based method as its key ingredient. The first step of our algorithm approximates a given diversity as a mixture of a novel hierarchical $P^n$
Potts model (a generalization of the $P^n$ Potts model~\cite{Kohli07_P3Beyond_CVPR}). The second step of our algorithm solves the labeling problem corresponding to each component of the
mixture via a divide-and-conquer approach, where each subproblem is solved using $\alpha$-expansion~\cite{Veksler_PhDthesis_1999}. This provides us with a small set of putative labelings, each
corresponding to a mixture component. The third step of our algorithm simply chooses the putative labeling with the minimum energy. Using both sythetic and real datasets,
we show that our overall approach provides accurate results for various computer vision applications.

%{\em The structure within the labels doesn't change the energy but it is intuitive to use this structure while finding the labeling. Flat tree, $P^n$ potts.?? Flat tree + pairwise -> alpha-expansion, only pairwise -> kumar}

%To summarize, our main contributions are:
%\begin{itemize}
%\item New family of discrete optimization problem called {\em parsimonious labeling}.
%\item Novel hierarchical $P^n$ Potts model.
%\item Move making algorithms for the hierarchical $P^n$ Potts model and the {\em parsimonious labeling} problem.
%\item Tight multiplicative bounds for the proposed move making algorithms.
%\end{itemize}
\section{Related Work}
In last few years the research community have witnessed many successful applications of high-order random fields to solve many low level vision related problems such as disparity estimation, image restoration, and object segmentation \cite{Delong_LabelCost_2010}\cite{Zehiry_Curvature_10}\cite{Fitzgibbon_Rendering_03} \cite{Kohli_RobustPn_2008}\cite{Ladicky_COOC_2010}\cite{Lan_BPHighOrder_06}\cite{Tarlow_HOP_10}\cite{Vicente_Joint_09}\cite{Woodford_Stereo_08}. In this work, our focus is on methods that (i) rely on efficient move-making algorithms based on graph cuts; (ii) provide a theoretical guarantee on the quality of the solution. Below, we discuss the work most closely related to ours in more detail.

Kohli et al. \cite{Kohli07_P3Beyond_CVPR} proposed the $P^n$ Potts model, which enforces label consistency over a set of random variables. In \cite{Kohli_RobustPn_2008}, they presented a robust version of the $P^n$ Potts model that takes into account the number of random variables that have been assigned an inconsistent label. Both the $P^n$ Potts model and its robust version lend themselves to the efficient $\alpha-$expansion algorithm \cite{Kohli07_P3Beyond_CVPR,Kohli_RobustPn_2008}. Furthermore, the $\alpha-$expansion algorithm also provides a multiplicative bound on the energy of the estimated labeling with respect to the optimal labeling. While the robust $P^n$ Potts model has been shown to be very useful for semantic segmentation, our generalization of the $P^n$ Potts model offers a natural extension of the metric labeling problem and is therefore more widely applicable to several low-level computer vision applications. Delong et al. \cite{Delong_LabelCost_2010} propose a global clique potential that is based on the cost of using a label or a subset of labels in the labeling of the random variables. Similar to the $P^n$ Potts model, the label cost based potential can also be minimized using $\alpha-$expansion. However, the theoretical guarantee provided by $\alpha-$expansion is an additive bound, which is not invariant to reparameterization of the energy function. Delong et al. \cite{Delong_HierCost_2012} also proposed an extension of their work to hierarchical costs. However, the assumption of a given hierarchy over the label set limits its application in practice.

Independently, Ladicky {\em et al.} \cite{Ladicky_COOC_2010}  proposed a global co-occurrence cost based high order model for a much wider class of energies that encourage the use of a small set of labels in the estimated labeling. Theoretically, the only constraint that \cite{Ladicky_COOC_2010} enforces in high order clique potential is that it should be monotonic in the label set. In other words, the problem addressed in \cite{Ladicky_COOC_2010} can be regarded as a generalization of parsimonious labeling. However, they approximately optimize an upperbound on the actual energy functional which does not provide any optimality guarantees. In our experiments, we demonstrate that our move-making algorithm significantly outperforms their approach for the special case of parsimonious labeling.

%\vspace{-2mm}
\section{Preliminaries}
%\vspace{-2mm}
\paragraph{The labeling problem.} Consider a random field defined over a set of random variables ${\bf x} = \{x_1, \cdots, x_N\}$ arranged in a predefined lattice $\mathcal{V} = \{1, \cdots, N\}$. Each random variable can take a value from a discrete label set $\mathcal{L} = \{l_1, \cdots, l_H \}$. Furthermore, let $\mathcal{C}$ denote the set of maximal cliques. Each maximal clique consists of a set of random variables that are all connected to each other in the lattice. A labeling is defined as the assignment or mapping of random variables to the labels. To assess the quality of each labeling ${\bf x}$ we define an energy functional as:
%\vspace{-2mm}
\begin{eqnarray}
E({\bf x}) =   \sum_{i \in \mathcal{V} } \theta_i(x_i)  +\sum_{c \in \mathcal{C}} \theta_c({\bf x}_c)
\label{eq:mrfGeneral}
\end{eqnarray}

%\vspace{-2mm}
where $\theta_i(x_i)$ is any arbitrary unary potential of assigning a label $x_i$ to the random variable $i$, and $\theta_c({\bf x}_c)$ is a clique potential for assigning the labels ${\bf x}_c$ to the variables in the clique $c$. We assume that the clique potentials are non-negative. As will be seen shortly, this assumption is satisfied by the new family of energy functionals proposed in our paper. The total number of putative labelings is $H^N$, each of which can be assessed using its corresponding energy value. Within this setting, \emph{the labeling problem} is to find the labeling that corresponds to the minimum energy according to the functional (\ref{eq:mrfGeneral}). Formally, the labeling problem can be defined as: ${\bf x}^* = \argmin_{{\bf x}} E({\bf x})$. 

%\vspace{-2mm}
\paragraph{$P^n$ Potts model.} An important special case of the labeling problem, which will be used throughout this paper, is defined by the $P^n$ Potts model \cite{Kohli07_P3Beyond_CVPR}. The $P^n$ Potts model is a generalization of of the well known Potts model \cite{Potts_1952} for high-order energy functions (when cliques can be of arbitrary sizes). For a given clique, the $P^n$ Potts model is defined as:
\vspace{-2mm}
\begin{eqnarray}
\label{eq:pnPotts}
    \theta_c({\bf x}_c)= 
\begin{cases}
    \gamma^k,& \text{if } x_i = l_k, \forall i \in c\\
   \gamma^{max},              & \text{otherwise}
\end{cases}
\end{eqnarray}

%\vspace{-2mm}
where $\gamma^k$ is the cost of assigning all the nodes to label $l_k \in \mathcal{L}$, and $\gamma^{max} > \gamma_k, \forall l_k \in \mathcal{L}$. Intuitively, the $P^n$ Potts model enforces label consistency by assigning the cost of $\gamma^{max}$ if more than one labels are present in the given clique.

%\vspace{-3mm}
\paragraph{$\alpha$-expansion for $P^n$ Potts model.} In order to solve the labeling problem corresponding to the $P^n$ Potts model, Kohli {\em et al.} \cite{Kohli07_P3Beyond_CVPR} proposed to use the $\alpha-$expansion algorithm \cite{Veksler_PhDthesis_1999}. The $\alpha-$expansion algorithm starts with an initial labeling, for example, by assigning each random variable to the label $l_1$. At each iteration, the algorithm moves to a new labeling by searching over a large {\em move space}. Here, the move space is defined as the set of labelings where each random variable is either assigned its current label or the label $\alpha$. The key result that makes $\alpha-$expansion a computationally feasible algorithm for the $P^n$ Potts model is that the minimum energy labeling within a move-space can be obtained using a single minimum st-cut operation on a graph that consists of a small number (linear in the size of the variables and the cliques) of vertices and arcs. The algorithm terminates when the energy cannot be reduced further for any choice of the label $\alpha$. We refer the reader to \cite{Kohli07_P3Beyond_CVPR} for further details.

%\vspace{-2mm}
\paragraph{Multiplicative Bound.} The labeling problem, and many of its special cases including the one defined by the $P^n$ Potts model, is known to be NP-hard. However, due to its practical importance, many approximate algorithms have been proposed in the literature (for example, the aforementioned $\alpha-$expansion algorithm for the $P^n$ Potts model). An intuitive and commonly used measure of the accuracy of an approximation algorithm is the multiplicative bound. Formally, the multiplicative bound of a given algorithm is said to be $B$ if the following condition is satisfied for all possible values of unary potential $\theta_i(.)$, and clique potentials $\theta_c({\bf x}_c)$:
\begin{eqnarray}
\label{eq:multiplicativeBound}
\sum_{i \in \mathcal{V} } \theta_i(\hat{x}_i)  +\sum_{c \in \mathcal{C}} \theta_c(\hat{{\bf x}}_c) \leq \sum_{i \in \mathcal{V} } \theta_i(x^*_i)  + B \sum_{c \in \mathcal{C}} \theta_c({\bf x}^*_c)
\end{eqnarray}
Here, $\hat{{\bf x}}$ is the labeling estimated by the algorithm and  ${\bf x}^*$ is a globally optimal labeling. By definition of an optimal labeling (one that has the minimum energy), the multiplicative bound will always be greater than or equal to one \cite{Pawan14_Rounding_NIPS}. 

%\vspace{-2mm}
\paragraph{Multiplicative Bound for the $\alpha$-expansion algorithm for the $P^n$ Potts model.} Using the $\alpha-$expansion algorithm for the $P^n$ potts model we obtain the multiplicative bound of $\lambda \min(\mathcal{M}, |\mathcal{L}|)$, where, $\mathcal{M}$ is the size of the largest maximal clique in the graph, $ |\mathcal{L}|$ is the number of labels, and $\lambda$ is defined as below \cite{Gould_AlphabetSoup_09CVPR}: 

%\vspace{-2mm}
\begin{eqnarray}
\label{eq:pnPottsLambda}
   \gamma^{min} = \min_{k \in \mathcal{L}} \gamma_k, \quad \lambda= 
\begin{cases}
    \frac{\gamma^{max}}{\gamma^{min}},& \text{if } \gamma^{min} \neq 0\\
   \gamma^{max},              & \text{otherwise}
\end{cases}
\end{eqnarray}

\section{Parsimonious Labeling}
The parsimonious labeling problem is defined using an energy functional that consists of unary potentials and clique potentials defined over cliques of arbitrary sizes. While the parsimonious labeling problem places no restrictions on the unary potentials, the clique potentials are specified using a {\em diversity} function \cite{Bryant13Diversities}. Before describing the parsimonious labeling problem in detail, we briefly define the diversity function for the sake of completion.

\begin{definition}
A diversity is a pair $(\mathcal{L}, \delta)$, where $\mathcal{L}$ is the set of labels and $\delta$ is a non-negative function defined on finite subsets of $\mathcal{L}$, $\delta: \Gamma \rightarrow \mathbb{R}, \forall \Gamma \subseteq \mathcal{L}$, satisfying following properties:
\begin{itemize}

%\vspace{-2mm}
\item Non Negativity: $\delta(\Gamma) \geq 0$, and $\delta(\Gamma) = 0$, iff, $|\Gamma| \leq 1$.

%\vspace{-2mm}
\item Triangular Inequality: if $\Gamma_2 \neq \emptyset $, $\delta(\Gamma_1 \cup \Gamma_2 ) + \delta(\Gamma_2 \cup \Gamma_3 ) \geq \delta(\Gamma_1 \cup \Gamma_3 ), \forall \Gamma_1, \Gamma_2, \Gamma_3 \subseteq \mathcal{L}.$  

%\vspace{-2mm}
\item Monotonicity: $\Gamma_1 \subseteq \Gamma_2$ implies $\delta(\Gamma_1) \leq \delta(\Gamma_2)$

%\vspace{-2mm}
\end{itemize}
\end{definition}

Using a diversity function, we can define a clique potential as follows. We denote by $\Gamma({\bf x}_c)$ the set of unique labels in the labeling of the clique c. Then, $\theta_c({\bf x}_c) = w_c\delta(\Gamma({\bf x}_c))$, where $\delta$ is a diversity function and $w_c$ is the non-negative weight corresponding to the clique $c$. Formally, the {\em parsimonious labeling problem } amounts to minimizing the following energy functional:

%\vspace{-2mm}
\begin{eqnarray}
\label{eq:parcEnergy}
E({\bf x}) =   \sum_{i \in \mathcal{V} } \theta_i(x_i)  +\sum_{c \in \mathcal{C}} w_c\delta(\Gamma({\bf x}_c))
\end{eqnarray}

%\vspace{-2mm}
Therefore, given a clique ${\bf x}_c$ and the set of unique labels $\Gamma({\bf x}_c)$ assigned to the random variables in the clique, the clique potential function for the parsimonious labeling problem is defined using $\delta(\Gamma({\bf x}_c))$, where $\delta: \Gamma({\bf x}_c) \rightarrow \mathbb{R}$ is a diversity function. 
 
Intuitively, diversities enforces parsimony by choosing a solution with less number of unique labels from a set of equally likely solutions, which makes it highly interesting for the computer vision community. This is an essential property in many vision problems, for example, in case of image segmentation, we would like to see label consistency within superpixels in order to preserve discontinuity. Unlike the $P^n$ Potts model the diversity does not enforce the label consistency very rigidly. It gives monotonic rise to the cost based on the number of labels assigned to the given clique.
%The following properties of diversities makes the parsimonious labeing problem highly interesting for the computer vision community:
%\begin{itemize}
%\item Parsimony: Intuitively, diversities enforces parsimony by choosing a solution with less number of unique labels from a set of equally likely solutions. This is an essential property in many vision problems, for example, in case of image segmentation, we would like to see label consistency within superpixels in order to preserve discontinuity. Unlike the $P^n$ potts model, the diversity does not enforce the label consistency very rigidly. It gives monotonic rise to the cost based on the number of labels assigned to the given clique.
%\item Invariance: Since the diversities are defined over the set of unique labels, not the frequency of the labels, the cost of assigning any wrong label to a clique is invariant to the number of random variables the wrong label is assigned to.
%\item Local and Global: Diversities can used as a global cost, similar to \cite
%%\item Broad range:
%%\item Efficiency: The hierarchical move making algorithm (as will be seen shortly in the next section) for solving the parsimonious labeling problem is very efficient, therefore, can be used in various problems
%%\item Optimality guarantees:
%\end{itemize}

An important special case of the parsimonious labeling problem is the {\em metric labeling problem}, which has been extensively studied in computer vision~\cite{Boykov01GraphCut} and theoretical computer science~\cite{Kleinberg_Metric_1999}. In metric labeling, the maximal cliques are of size two (pairwise) and the clique potential function is a metric distance function defined over the labels. Recall that a distance function $d:\mathcal{L}\times\mathcal{L}\rightarrow\mathbb{R}$ is a metric if and only if: (i) $d(.,.) \geq 0$; (ii) $d(i,j)+d(j,k)\geq d(i,k), \forall i,j,k$; and (iii) $d(i,j) = 0$ if and only if $i=j$. Notice that, there is a direct link between the metric distance function and the diversities. The diversities can be seen as the metric distance function over the sets of arbitrary sizes. In another words, diversities are the generalization of the metric distance function and boil down to a metric distance function if the input set is restricted to the subsets with cardinality of at most two. Another way of understanding the connection between metrics and diversities is that {\em every diversity induces a metric}. In other words, consider $d(l_i,l_i) = \delta({l_i})$ and $d(l_i,l_j) = \delta(\{l_i,l_j \})$. Using the properties of diversities, it can be shown that $d(\cdot,\cdot)$ is a metric distance function. Hence, in case of energy functional defined over pairwise cliques, the \emph{parsimonious labeling} problem reduces to the \emph{metric labeling} problem.

In the remaining part of this section we talk about a specific type of diversity called {\em the diameter diversity}, show its relation with the well known $P^n$ Potts model, and propose a {\em hierarchical $P^n$ Potts model} based on the diameter diversity defined over a hierarchical clustering (defined shortly). However, note that our approach is applicable to any general parsimonious labeling problem.

%\vspace{-3mm}
\paragraph{Diameter diversity.} Among many known diversities (\cite{Bryant12Hyperconvexity}), in this work, we are primarily interested in the \emph{diameter diversity}. Let $(\mathcal{L}, \delta)$ be a diversity and  $(\mathcal{L}, d)$ be the induced metric of $(\mathcal{L}, \delta)$, where $d:\mathcal{L} \times \mathcal{L} \rightarrow \mathbb{R}$  and $d(l_i,l_j) = \delta(\{l_i,l_j\}), \forall l_i,l_j \in \mathcal{L}$, then for all $\Gamma \subseteq \mathcal{L}$, the diameter diversity is defined as:

%\vspace{-4mm}
\begin{eqnarray}
\label{eq:diaDiversity}
\delta^{dia} (\Gamma) = \max_{l_i,l_j \in \Gamma} d(l_i,l_j).
\end{eqnarray}

%\vspace{-2mm}
Clearly, given the induced metric function defined over a set of labels, diameter diversity over any subset of labels gives the measure of how dissimilar (or diverse) the labels are. More the dissimilarity, based on the induced metric function, higher is the diameter diversity. Therefore, using diameter diversity as clique potentials enforces the similar labels to be together. Thus, a special case of {\em parsimonious labeling} in which the clique potentials are of the form of diameter diversity can be defined as below:

%\vspace{-2mm}
\begin{eqnarray}
\label{eq:diaDivEnergy}
E({\bf x}) =   \sum_{i \in \mathcal{V} } \theta_i(x_i)  +\sum_{c \in \mathcal{C}} w_c\delta^{dia}(\Gamma({\bf x}_c))
\end{eqnarray}

%\vspace{-2mm}
Notice that the diameter diversity defined over uniform metric is nothing but the $P^n$ Potts model where $\gamma_i = 0$. In what follows we define a generalization of the $P^n$ Potts model, the {\em hierarchical $P^n$ Potts model}, which will play a key role in the rest of the paper. 

\begin{figure}[t]			
	\centering
	\setlength{\tabcolsep}{-2em}
	%\scalebox{0.7}{
	\includegraphics[scale=0.55]{./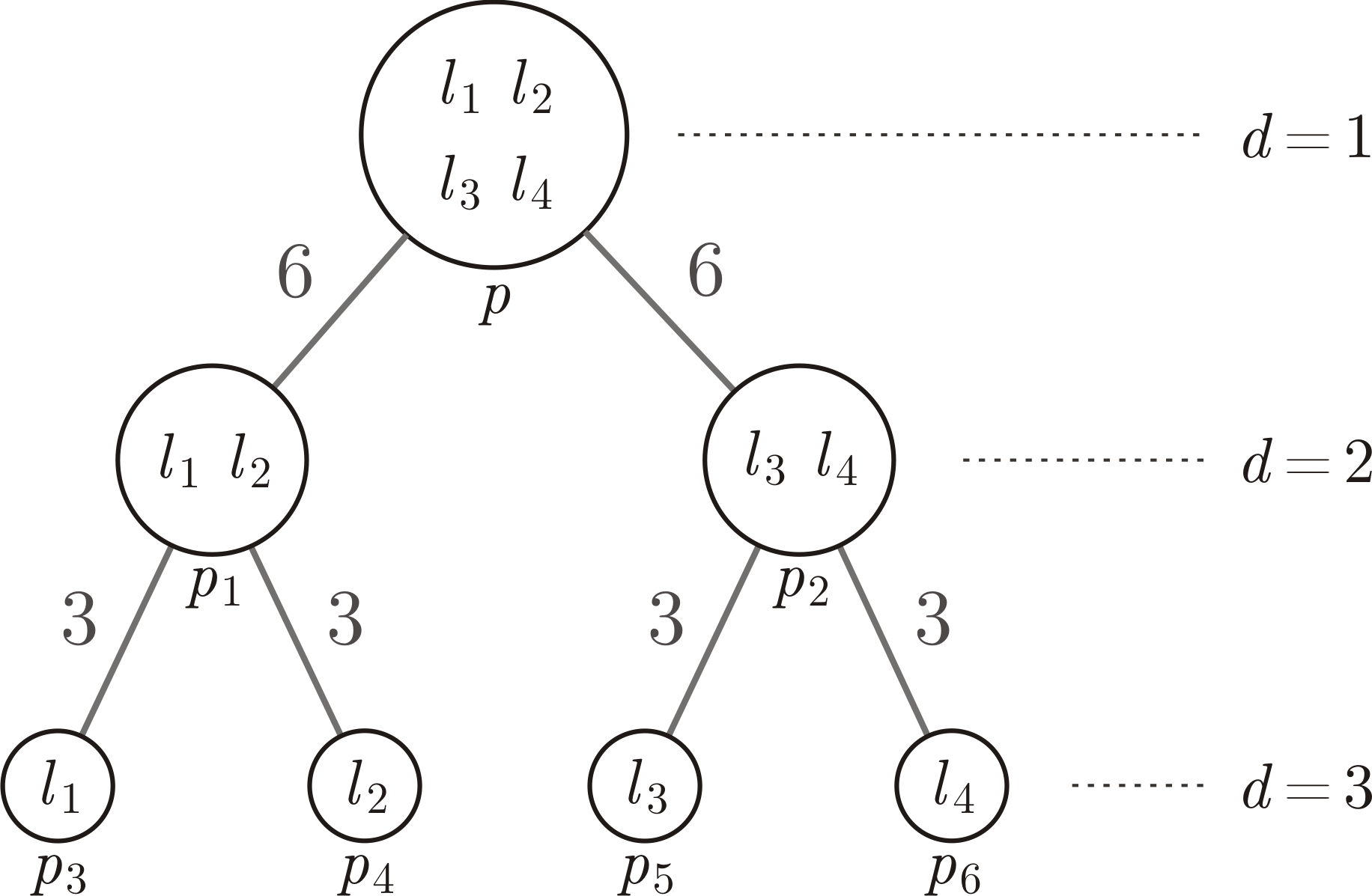}
	\caption{\label{fig:rHST}\emph{An example of r-{\sc hst} for $r=2$. The cluster associated with root $p$ contains all the labels. As we go down, the cluster splits into subclusters and finally we get the singletons, the leaf nodes (labels). The root is at depth of $d=1$ and leaf nodes at $d=3$. The metric defined over the r-{\sc hst} is denoted as $d^t(.,.)$, the shortest path between the inputs. For example, $d^t(l_1,l_3) = 18$ and $d^t(l_1,l_2) = 6$. {\em The diameter diversity} for the subset of labels at cluster $p$ is $\max_{ \{ l_i,l_j \} \in \{ l_1, l_2, l_3, l_4 \}} d^t(l_i, l_j) = 18.$}}
\end{figure}	

%\vspace{-3mm}
\paragraph{The Hierarchical $P^n$ Potts Model.} The hierarchical $P^n$ Potts model is a diameter diversity defined over a special type of metric known as the r-{\sc hst} metric. A rooted tree, as shown in figure (\ref{fig:rHST}), is said to be an r-{\sc hst}, or \emph{r-hierarchically well separated} \cite{Bartal98rhst} if it satisfy the following properties: (i) all the leaf nodes are the labels; (ii) all edge weights are positive; (iii) the edge lengths from any node to all of its children are the same; and (iv) on any root to leaf path the edge weight decrease by a factor of at least $r>1$. We can think of a r-{\sc hst} as a hierarchical clustering of the given label set $\mathcal{L}$. The root node represents the cluster at the top level of the hierarchy and contains all the labels. As we go down in the hierarchy, the clusters breaks down into smaller clusters until we get as many leaf nodes as the number of labels in the given label set. The metric distance function defined on this tree $d^t(.)$ is known as the r-{\sc hst} metric. In other words, the distance $d^t(\cdot,\cdot)$ between any two nodes in the given r-{\sc hst} is the shortest path distance between these nodes in the tree. The diameter diversity defined over $d^t(.,.)$ is called the hierarchical $P^n$ Potts model. The example of a diameter diversity defined over an r-{\sc hst} is given in the figure (\ref{fig:rHST}).

%\vspace{-2mm}
\section{The Hierarchical Move Making Algorithm}
In the first part of this section we propose a move making algorithm for the hierarchical $P^n$ Potts model (defined in the previous section). In the second part, we show how our hierarchical move making algorithm can be used to minimize the much more general \emph{parsimonious labeling} problem with optimality guarantees (tight multiplicative bound). 
\subsection{The Hierarchical Move Making Algorithm for the Hierarchical $P^n$ Potts Model}

\begin{algorithm}[tb]
\caption{The Move Making Algorithm for the Hierarchical $P^n$ Potts Model.}
\label{algo:hierPn}
\begin{algorithmic}[1]
\INPUT r-{\sc hst} Metric, $w_c, \forall c \in \mathcal{C}$, and $\theta_i(x_i), \forall i \in \mathcal{V}$
\STATE $d = D$, the leaf nodes
\REPEAT
\FOR {each $p \in \mathcal{N}(d)$}
\IF{$|\eta(p)| = 0$, leaf node}
\STATE $x_i^p = p, \forall i \in \mathcal{V}$
\ELSE
\STATE Fusion Move
\begin{eqnarray}
\label{eq:fusionMove}
\hat{\bf t}^p = \argmin_{{\bf t}^p \in \{1, \cdots, |\eta(p)| \}^N } E({\bf t}^p)
\end{eqnarray}
\STATE $x^p_i = x_i^{\eta(p,\hat{t}_i^p)}$.
\ENDIF
\ENDFOR

\STATE $d \leftarrow d-1$
\UNTIL{ $d \geq 1$}.
\end{algorithmic}
\end{algorithm}

In Hierarchical $P^n$ Potts model the clique potentials are of the form of the diameter diversity defined over a given r-{\sc hst} metric function. The move making algorithm proposed in this section to minimize such an energy functional is a divide-and-conquer based approach, inspired by the work of \cite{Pawan_SemiMetric_09UAI}. Instead of solving the actual problem, we divide the problem into smaller subproblems where each subproblem amounts to solving $\alpha-$expansion for the $P^n$ Potts model \cite{Kohli07_P3Beyond_CVPR}.  More precisely, given an r-{\sc hst}, each node of the r-{\sc hst} corresponds to a subproblem. We start with the bottom node of the r-{\sc hst}, which is a leaf node, and go up in the hierarchy solving each subproblem associated with the nodes encountered. %  using the solutions of the child nodes to obtain the solution for the parent node. In our algorithm, we show that the fusion of the different solutions to obtain the solution for the parent node can be modeled using $P^n$-Potts model \cite{Kohli07_P3Beyond_CVPR}, thus can be solved in polynomial time using the $\alpha-$expansion algorithm for the $P^n$-Potts model. This process is repeated recursively until we reach the root node, which is the same as the original problem. 

In more detail, consider a node $p$ of the given r-{\sc hst}. Recall that any node $p$ in the r-{\sc hst} represents a cluster of labels denoted as $\mathcal{L}^p \subseteq \mathcal{L}$ (figure \ref{fig:rHST}). In another words, the leaf nodes of the subtree rooted at $p$ belongs to the $\mathcal{L}^p$. Thus, the subproblem defined at node $p$ is to find the labeling ${\bf x}^p$ where the label set is restricted to $\mathcal{L}^p$, as defined below. 

%\vspace{-2mm}
\begin{eqnarray}
\label{eq:energyNodeP}
{\bf x}^p = \argmin_{{\bf x} \in \mathcal{L}^p}  \Big( \sum_{i \in \mathcal{V} } \theta_i(x_i)  +\sum_{c \in \mathcal{C}} w_c\delta^{dia}(\Gamma({\bf x}_c)) \Big)
\end{eqnarray}

%\vspace{-2mm}
If $p$ is the root node, then the above problem (equation \ref{eq:energyNodeP}) is as difficult as the original labeling problem (since ${\cal L}^p = {\cal L}$). However, if $p$ is the leaf node then the solution of the problem associated with $p$ is trivial, $x_i^p = p$ for all $i \in \mathcal{V}$, which means, assign the label $p$ to all the random variables. This insight leads to the design of our approximation algorithm, where we start by solving the simple problems corresponding to the leaf nodes, and use the labelings obtained to address the more difficult problem further up the hierarchy. In what follows, we describe how the labeling of the problem associated with the node $p$, when $p$ is not the leaf node, is obtained using the labelings of its chidren node. %It remains to see how to obtain the solution of the subproblem at $p$, when $p$ is not a leaf node. To solve the labeling problem associated with a non-leaf node we propose to use the labelings associated with its child nodes and combine (or fuse) them iobtaine the labeling at node $p$. It turns out that in case of hierarchical $P^n$ Potts model, the problem of fusing the labelings of the child nodes to obtain the labeling for the parent node can be modeled using $P^n$ Potts model, therefore, can be solved very efficiently using the $\alpha-$expansion algorithm for the $P^n$ Potts model. In what follows, we show how to model the labeling problem at non-leaf node $p$ as the  $P^n$ Potts model.

\begin{figure}[t]			
	\centering
	\setlength{\tabcolsep}{-2em}
	%\scalebox{0.7}{
	\includegraphics[width=\linewidth]{./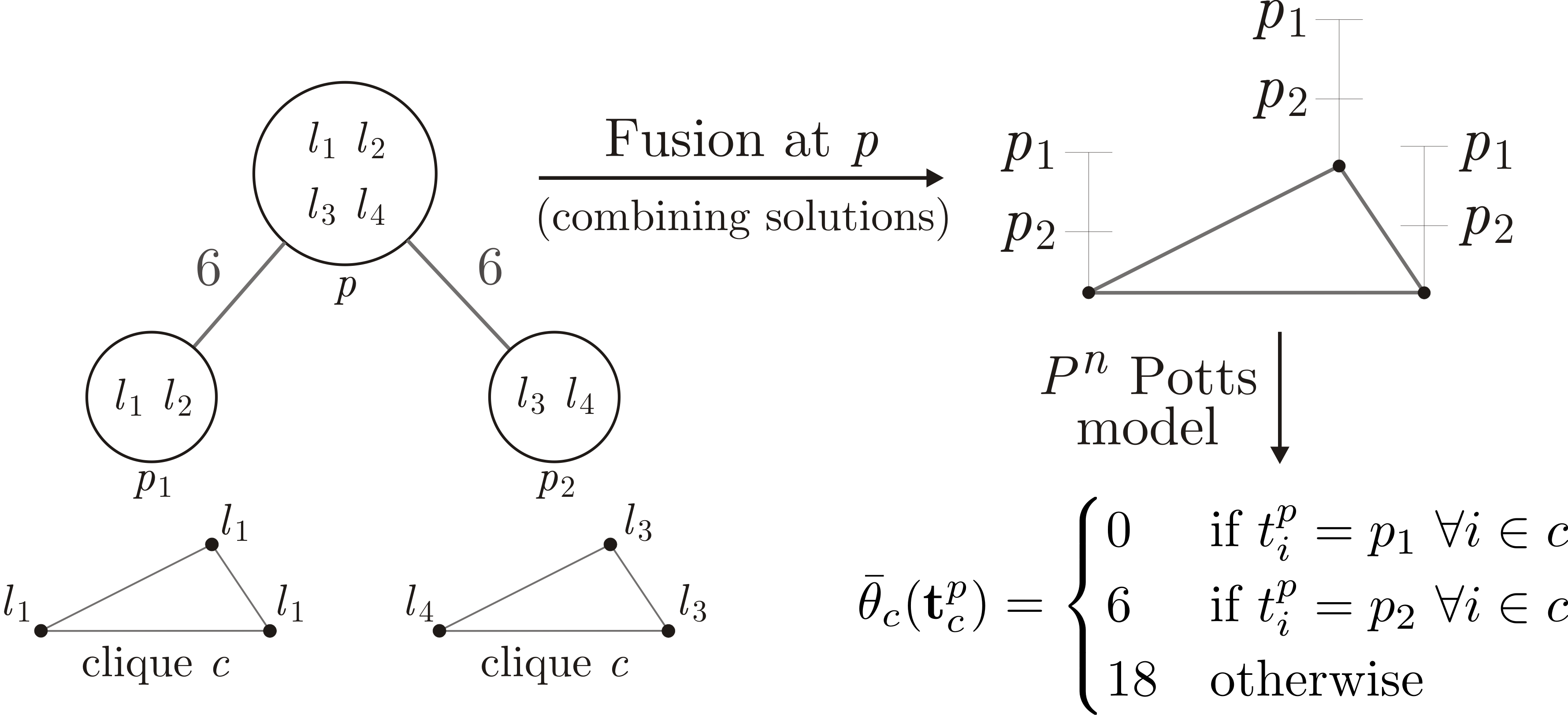}
	\caption{\label{fig:fusionPn}\emph{An example of solving the labeling problem at non-leaf node $(p)$ by combining the solutions of its child nodes $\{p_1, p_2\}$, given clique $c$ and the labelings that it has obtained at the child nodes. Note that the hierarchical clustering shown in this figure is the top two levels of the r-{\sc hst} shown in the figure (\ref{fig:rHST}), for a given clique $c$. The diameter diversity of the labeling of clique $c$ at node $p_1$ is 0 as it contains only one unique label $l_1$. The diameter diversity of the labeling at $p_2$ is $d^t(l_3,l_4) = 6$ and the label set at $p$ is 18. }}
\end{figure}	

%\vspace{-4mm}
\paragraph{Solving the Parent Labeling Problem}
Before delving into the details, let us define some notations for the purpose of clarity. Let $D$ be the depth (or the number of levels) in the given r-{\sc hst}. The root node being at the top level, depth of one. Let $\eta(p)$ denotes the set of child nodes associated with a non-leaf node $p$ and $\eta(p,k)$ denotes its $k^{th}$ child node. Recall that our approach is bottom up, therefore, for each child node of $p$ we already have a labeling associated with them. We denote the labeling associated with the $k^{th}$ child of the node $p$ as ${\bf x}^{\eta(p,k)}$. Thus, $x_i^{\eta(p,k)}$ denotes the label assigned to the $i^{th}$ random variable by the labeling of the $k^{th}$ child of the node $p$. We also define an $N$ dimensional vector ${\bf t}^p$, where each index of the the vector can take a value from the set denoting the child indices of node $p$, $\{1, \cdots, |\eta(p)| \}$, where $|\eta(p)|$ denotes the number of child nodes of $p$. More precisely, $t_i^p = k$ denotes that the label for the $i^{th}$ random variable comes from the $k^{th}$ child of the node $p$. Therefore, the labeling problem at node $p$ reduces to finding the optimal ${\bf t}^p$. Thus, the labeling problem at node $p$ amounts to finding the best child index $k \in \{1, \cdots, |\eta(p)| \}$ for each random variable $i \in \mathcal{V}$ so that the label assigned to the random variable comes from the labeling of the $k^{th}$ child.  

Using the above notations, associated with a ${\bf t}^p$ we define a new energy functional as:

%\vspace{-3mm}
\begin{eqnarray}
\label{eq:pnFusion}
E({\bf t}^p) =   \sum_{i \in \mathcal{V} } \bar{\theta}_i(t_i^p)  +\sum_{c \in \mathcal{C}} w_c \bar{\theta}_c({\bf t}_c^p)
\end{eqnarray}

%\vspace{-3mm}
where 

%\vspace{-3mm}
\begin{eqnarray}
\bar{\theta}_i(t_i^p) = \theta_i(x_i^{\eta(p,k)}) \quad \textit{if} \quad t_i^p = k
\end{eqnarray}

%\vspace{-2mm}
which says that the unary potential for $t_i^p = k$ is the unary potential associated to the $i^{th}$ random variable corresponding to the label $x_i^{\eta(p,k)}$. 

The new clique potential $\bar{\theta}_c({\bf t}_c^p)$ is as defined below:
%\vspace{-2mm}
\begin{eqnarray}
\label{eq:pnPottsFusion}
    \bar{\theta}_c({\bf t}_c^p)= 
\begin{cases}
    \gamma_k^p,& \text{if } t_i^p = k, \forall i \in c\\
   \gamma^p_{max},              & \text{otherwise}
\end{cases}
\end{eqnarray}
where $\gamma_k^p = \delta_{dia}( \Gamma ( {\bf x}_c^{\eta(p,k)} ) )$ is the diameter diversity of the set of unique labels associated with ${\bf x}_c^{\eta(p,k)}$ and $\gamma^p_{max} = \delta_{dia}(\mathcal{L}^p)$ is the diameter diversity of the set of labels associated with the cluster at node $p$. Recall that, because of the construction of the r-{\sc hst}, $\mathcal{L}^q \subset \mathcal{L}^p$ for all $q \in \eta(p)$. Hence, the monotonicity property of the diameter diversity ensures that $\gamma^p_{max} > \gamma_k^p, \forall k \in \eta(p)$. This is the sufficient criterion to prove that the potential function defined by equation (\ref{eq:pnPottsFusion}) is a $P^n$ Potts model. Therefore, the $\alpha-$expansion algorithm can be used to obtain the locally optimal ${\bf t}^p$ for the energy functional (\ref{eq:pnFusion}). Once we have obtained the locally optimal $\hat{\bf t}^p$, the labeling ${\bf x}^p$ at node $p$ can be trivially obtained as follows: $x^p_i = x_i^{\eta(p,\hat{t}_i^p)}$, which says that the final label of the $i^{th}$ random variable is the one assigned to it corresponding to the labeling of the $(\hat{t}_i^p)^{th}$ child of the node $p$. 

Figure (\ref{fig:fusionPn}) shows an instance of the above mentioned algorithm to combine the labelings of the child nodes to obtain the labeling of the parent node. The complete hierarchical move making algorithm for the hierarchical $P^n$ Potts model is shown in the Algorithm-\ref{algo:hierPn}.

%\vspace{-6mm}
 \paragraph{Multiplicative Bound.} Theorem-\ref{theorem:rhstMB} gives the multiplicative bound for the Move Making Algorithm for the Hierarchical $P^n$ Potts model.
 
 %\vspace{-5mm}
\begin{theorem}
\label{theorem:rhstMB}
The move making algorithm for the hierarchical $P^n$ Potts model, Algorithm-\ref{algo:hierPn}, gives the multiplicative bound of $\left( \frac{r}{r-1} \right)   \min(\mathcal{M},|\mathcal{L}|)$ with respect to the global minima. Here, $\mathcal{M}$ is the size of the largest maximal-clique and $|\mathcal{L}|$ is the number of labels.

Proof: Given in Appendix.
\end{theorem}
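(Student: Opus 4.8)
The plan is to bound the energy of the algorithm's output $\hat{\bf x}={\bf x}^{\mathrm{root}}$ against that of a global minimizer ${\bf x}^*$ by an induction that runs up the r-{\sc hst}, from the leaves ($d=D$) to the root ($d=1$). The first observation that simplifies everything is that the fusion unaries merely inherit the original ones, $\bar\theta_i(t_i^p=k)=\theta_i(x_i^{\eta(p,k)})$, so the unary cost is transported through the whole recursion at factor one, exactly as the target inequality~(\ref{eq:multiplicativeBound}) demands; the entire argument therefore reduces to controlling the clique terms $w_c\,\delta^{dia}(\Gamma({\bf x}_c))$. The two factors in the claimed bound should come from two distinct sources: the combinatorial factor $\min(\mathcal{M},|\mathcal{L}|)$ from the multiplicative guarantee of $\alpha$-expansion applied to the $P^n$ Potts fusion problem (equation~\ref{eq:pnPottsFusion}), and the metric factor $\frac{r}{r-1}$ from the geometry of the r-{\sc hst}.

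The key geometric lemma I would prove first is a \emph{geometric decay} of cluster diameters down the tree: for any non-leaf node $p$ and any child $q\in\eta(p)$, the defining r-{\sc hst} property---edges to all children of a node are equal and shrink by a factor of at least $r>1$ along any root-to-leaf path---implies $\delta^{dia}(\mathcal{L}^q)\le\frac{1}{r}\,\delta^{dia}(\mathcal{L}^p)$. This is seen by writing a cluster diameter as twice the tree-height of its node and bounding the tail of the (geometrically shrinking) edge-length sequence; the very same estimate, summed over a full chain of descendants, produces $\sum_{j\ge0}r^{-j}=\frac{r}{r-1}$, which is the origin of the metric factor. I would also note that at node $p$ the fusion costs satisfy $\gamma^p_{max}=\delta^{dia}(\mathcal{L}^p)$ and $\gamma^p_k=\delta^{dia}(\Gamma({\bf x}_c^{\eta(p,k)}))\le\delta^{dia}(\mathcal{L}^{\eta(p,k)})$ by monotonicity, so every diameter the algorithm can incur at level $d$ is at most $\delta^{dia}(\mathcal{L}^p)$.

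The core of the proof is a charging argument driven by a \emph{routing comparison}. Reading ${\bf x}^*$ from the root down, each variable $i$ is routed at every node into the unique child whose cluster contains the optimal label $x^*_i$; applying the $\alpha$-expansion $P^n$ Potts bound at node $p$ against this routed assignment splits the incurred clique cost into two kinds of contributions. When all variables of a clique $c$ route into the same child the cost is a child diameter $\gamma^p_k=\delta^{dia}(\Gamma({\bf x}_c^{\eta(p,k)}))$, which is handled by the inductive hypothesis at the child; when they route into different children---which can only happen at or above the lowest common ancestor of $\Gamma({\bf x}^*_c)$---the cost is $\gamma^p_{max}=\delta^{dia}(\mathcal{L}^p)$, and the geometric-decay lemma lets me charge it to the optimal diameter $\delta^{dia}(\Gamma({\bf x}^*_c))$ with the single combinatorial factor $\min(\mathcal{M},|\mathcal{L}|)$ supplied by $\alpha$-expansion. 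Unrolling the recursion, the child-diameter contributions at successive levels are dominated by $\delta^{dia}(\mathcal{L}^{q})$ values that shrink by at least $r$ per level, so their sum telescopes into the geometric factor $\frac{r}{r-1}$; assembling the per-clique bounds with the pass-through unaries gives the inequality of Theorem~\ref{theorem:rhstMB}.

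The step I expect to be the main obstacle is keeping the combinatorial factor $\min(\mathcal{M},|\mathcal{L}|)$ from compounding over the $D$ levels. A naive induction that simply reapplies the $\alpha$-expansion bound at each node and composes the guarantees would raise $\min(\mathcal{M},|\mathcal{L}|)$ to a power and, worse, blow up through the local ratio $\lambda^p=\gamma^p_{max}/\gamma^p_{min}$, which is unbounded as soon as some child labeling is a singleton and $\gamma^p_{min}=0$. Avoiding this is exactly what forces the global routing scheme: I must charge the cross-children cost \emph{directly} to the optimal diameter at one level---paying $\min(\mathcal{M},|\mathcal{L}|)$ only there---while the remaining, recursively generated child diameters are made to telescope into $\frac{r}{r-1}$ rather than to multiply. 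Making the routing assignment well defined (each $x^*_i$ determines a unique descent through the tree) and verifying that the two-way split of the fusion cost is valid for every clique, including the degenerate $\gamma^p_{min}=0$ cliques, is the delicate bookkeeping that the appendix will have to carry out in full.
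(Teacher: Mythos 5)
Your proposal follows essentially the same route as the paper's appendix proof: a bottom-up induction over the r-{\sc hst} in which the labeling ${\bf x}^p$ is compared, via local optimality of the fusion move, against a hybrid labeling that routes each variable to the child whose cluster contains its optimal label; cliques interior to a child are handled by the inductive hypothesis, while cliques straddling several children are charged to the optimal diameter using the geometric tail sum $\sum_{j\ge 0} r^{-j} = \frac{r}{r-1}$ together with a counting factor of at most $\min(\mathcal{M},|\mathcal{L}|)$ paid once per clique at the level where its optimal labels split. The only cosmetic discrepancy is attribution: in the paper the factor $\min(\mathcal{M},|\mathcal{L}|)$ comes from over-counting boundary cliques when the per-child local-optimality inequalities are summed over $\eta(p)$ (the $\alpha$-expansion guarantee is never invoked as a black box, which, as you yourself observe, would fail when $\gamma^p_{\min}=0$), but this is the same counting mechanism you describe.
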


\begin{algorithm}[tb]
\caption{The Move Making Algorithm for the Parsimonious Labeling Problem.}
\label{algo:parsimonyMoveMaking}
\begin{algorithmic}[1]
\INPUT Diversity $(\mathcal{L}, \delta)$; $w_c, \forall c \in \mathcal{C}$; $\theta_i(x_i), \forall i \in \mathcal{V}$; $\mathcal{L}$; $k$
\STATE Approximate the given diversity as the mixture of $k$ hierarchical $P^n$ Potts model using Algorithm-\ref{algo:divToHierPn}.
\FOR {each hierarchical $P^n$ Potts model in the mixture}
\STATE Use the hierarchical move making algorithm defined in the Algorithm-\ref{algo:hierPn}.
\STATE Compute energy corresponding to the solution obtained.
\ENDFOR
\STATE Choose the solution with the minimum energy.
\end{algorithmic}
\end{algorithm}

\begin{algorithm}[tb]
\caption{Diversity to Mixture of Hierarchical $P^n$ Potts model.}
\label{algo:divToHierPn}
\begin{algorithmic}[1]
\INPUT Diversity $(\mathcal{L}, \delta)$, $k$
\STATE Compute the induced metric, $d(.)$, where $d(l_i, l_j) = \delta(\{l_i, l_j \}), \forall l_i, l_j \in \mathcal{L}$.
\STATE Approximate $d(.)$ into mixture of $k$ r-{\sc hst} metrics $d^t(.)$ using the algorithm proposed in \cite{Fakcharoenphol04TightBound}.
\FOR {each r-{\sc hst} metrics $d^t(.)$}
\STATE Obtain the corresponding Hierarchical $P^n$ Potts model by defining the diameter diversity over $d^t(.)$
\ENDFOR
\end{algorithmic}
\end{algorithm}

%\begin{algorithm}[tb]
%\caption{Metric to mixture to r-{\sc hst} tree metrics.}
%\label{algo:divToHierPn}
%\begin{algorithmic}[1]
%\INPUT Metric $d(.)$, $k$
%\STATE Compute the induced metric, $d(.)$, where $d(l_i, l_j) = \delta(\{l_i, l_j \}), \forall l_i, l_j \in \mathcal{L}$.
%\STATE Approximate $d(.)$ into mixture of $k$ r-{\sc hst} metrics $d^t(.)$ using the algorithm proposed in \cite{Fakcharoenphol04TightBound}.
%\FOR {each r-{\sc hst} metrics $d^t(.)$}
%\STATE Obtain the corresponding Hierarchical $P^n$ Potts model by defining diameter diversity over $d^t(.)$
%\ENDFOR
%\end{algorithmic}
%\end{algorithm}

\subsection{The Move Making Algorithm for the Parsimonious Labeling}
In the previous subsection, we proposed a hierarchical move making algorithm for the hierarchical $P^n$ Potts model. This restricted us to a very limited class of clique potentials. In this section we generalize our approach to the much more general {\em parsimonious labeling problem}.

The move making algorithm for the parsimonious labeling problem is shown in the Algorithm-(\ref{algo:parsimonyMoveMaking}). Given a diversity based clique potentials, clique weights, and the unary potentials, the  Algorithm-(\ref{algo:parsimonyMoveMaking}) approximates the diversity into a {\em mixture of hierarchical $P^n$ Potts models} and then use the previously defined hierarchical move making algorithm on each of the hierarchical $P^n$ Potts models. 

The algorithm for approximating a given diversity into a mixture of hierarchical $P^n$ Potts models is shown in Algorithm-(\ref{algo:divToHierPn}). The first and the third steps of the Algorithm-(\ref{algo:divToHierPn}) have already been discussed in the previous sections. The second step, which amounts to finding the mixture of r-{\sc hst} metrics for a given metric, can be solved using the randomized algorithm proposed in \cite{Fakcharoenphol04TightBound}. We refer the reader to \cite{Fakcharoenphol04TightBound} for further details of the algorithm for approximating a metric using a mixture of r-HST metrics.

%\vspace{-3mm}
\paragraph{Multiplicative Bound} Therorem-\ref{theo:diversityMB} gives the multiplicative bound for the {\em parsimonious labeling} labeling problem, when the clique potentials are any general diversity. 

%\vspace{-2mm}
\begin{theorem}
\label{theo:diversityMB}
The move making algorithm defined in Algorithm-\ref{algo:parsimonyMoveMaking} gives the multiplicative bound of $\left( \frac{r}{r-1} \right) (|\mathcal{L}|-1) (\log |\mathcal{L}|) \min(\mathcal{M},|\mathcal{L}|)$ for the {\em parsimonious labeling} problem (equation \ref{eq:parcEnergy}). Here, $\mathcal{M}$ is the size of the largest maximal-clique and $|\mathcal{L}|$ is the number of labels.

Proof: Given in the Appendix.
\end{theorem}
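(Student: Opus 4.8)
The plan is to chain together three multiplicative losses, one for each layer of approximation that Algorithm~\ref{algo:divToHierPn} introduces, and to exploit the fact that Algorithm~\ref{algo:parsimonyMoveMaking} returns the labeling of \emph{minimum true energy} among the mixture components. Write $E_\delta$ for the true parsimonious energy~(\ref{eq:parcEnergy}), $\delta^{dia}$ for the diameter diversity of the induced metric $d(l_i,l_j)=\delta(\{l_i,l_j\})$, and $\delta^{dia}_t$ for the diameter diversity of an r-{\sc hst} metric $d^t$ produced in Step~2. Let ${\bf x}^*$ be a global minimiser of $E_\delta$ and $\hat{\bf x}^t$ the labeling returned by Algorithm~\ref{algo:hierPn} on tree $t$. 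The target is $\mathbb{E}\big[E_\delta(\hat{\bf x})\big]\le\left(\frac{r}{r-1}\right)(|\mathcal{L}|-1)(\log|\mathcal{L}|)\min(\mathcal{M},|\mathcal{L}|)\,E_\delta({\bf x}^*)$, the expectation being over the random r-{\sc hst}s.

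First I would reduce a general diversity to a diameter diversity. By monotonicity every pairwise value $\delta(\{l_i,l_j\})$ with $l_i,l_j\in\Gamma$ is at most $\delta(\Gamma)$, so $\delta^{dia}(\Gamma)\le\delta(\Gamma)$. For the reverse direction I would peel off one label at a time: removing a label $l_m$ and using a second label $l'\in\Gamma\setminus\{l_m\}$ as the singleton pivot in the triangular inequality (take $\Gamma_1=\Gamma\setminus\{l_m\}$, $\Gamma_2=\{l'\}$, $\Gamma_3=\{l_m\}$) gives $\delta(\Gamma)\le\delta(\Gamma\setminus\{l_m\})+d(l',l_m)$, and since $d(l',l_m)\le\delta^{dia}(\Gamma)$ an easy induction yields the sandwich
\begin{eqnarray}
\delta^{dia}(\Gamma)\ \le\ \delta(\Gamma)\ \le\ (|\Gamma|-1)\,\delta^{dia}(\Gamma)\ \le\ (|\mathcal{L}|-1)\,\delta^{dia}(\Gamma).\nonumber
\end{eqnarray}
This is the source of the $(|\mathcal{L}|-1)$ factor: replacing the diversity clique potentials by diameter-diversity potentials over the \emph{same} induced metric inflates the clique terms by at most this factor and leaves the unary terms untouched.

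Next I would pass from the induced metric $d$ to the mixture of r-{\sc hst} metrics. The construction of \cite{Fakcharoenphol04TightBound} supplies a distribution over dominating trees with $d^t(l_i,l_j)\ge d(l_i,l_j)$ for every pair and $\mathbb{E}_t[d^t(l_i,l_j)]\le O(\log|\mathcal{L}|)\,d(l_i,l_j)$. Domination gives $\delta^{dia}(\Gamma)\le\delta^{dia}_t(\Gamma)$ pointwise, hence $\delta(\Gamma)\le(|\mathcal{L}|-1)\delta^{dia}_t(\Gamma)$ for every tree; meanwhile Theorem~\ref{theorem:rhstMB} bounds $\hat{\bf x}^t$ against an arbitrary competitor (the move-making bound is of the form~(\ref{eq:multiplicativeBound}), so it applies to ${\bf x}^*$ and not only to the $E_t$-optimum) with factor $\frac{r}{r-1}\min(\mathcal{M},|\mathcal{L}|)$ on the clique terms. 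Combining these and pulling the $(|\mathcal{L}|-1)\ge1$ out front,
\begin{eqnarray}
E_\delta(\hat{\bf x}^t)\ \le\ (|\mathcal{L}|-1)\Big[\textstyle\sum_{i}\theta_i(x^*_i)+\frac{r}{r-1}\min(\mathcal{M},|\mathcal{L}|)\sum_{c}w_c\,\delta^{dia}_t(\Gamma({\bf x}^*_c))\Big].\nonumber
\end{eqnarray}
Because Algorithm~\ref{algo:parsimonyMoveMaking} returns the minimum-energy labeling over the mixture, $E_\delta(\hat{\bf x})\le E_\delta(\hat{\bf x}^t)$ for every sampled $t$; taking expectations over the FRT distribution and pushing $\mathbb{E}_t$ onto $\delta^{dia}_t(\Gamma({\bf x}^*_c))$ produces the remaining $\log|\mathcal{L}|$ factor, replaces $\delta^{dia}_t$ by $\delta^{dia}\le\delta$, and collapses the right-hand side to the stated bound times $E_\delta({\bf x}^*)$.

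The step I expect to be the main obstacle is exactly this last one: controlling $\mathbb{E}_t\big[\delta^{dia}_t(\Gamma({\bf x}^*_c))\big]$ by $O(\log|\mathcal{L}|)\,\delta^{dia}(\Gamma({\bf x}^*_c))$. The FRT guarantee bounds the expected stretch of a \emph{single fixed pair}, whereas $\delta^{dia}_t(\Gamma)=\max_{l_i,l_j\in\Gamma}d^t(l_i,l_j)$ is a maximum over pairs whose maximiser itself depends on the random tree, so $\mathbb{E}_t$ and $\max$ cannot simply be exchanged, and a naive union bound over the $\binom{|\Gamma|}{2}$ pairs would cost an extra factor of $|\Gamma|$ that is absent from the theorem. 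The way around this is to use the hierarchical-partition structure of the r-{\sc hst}: the tree diameter of $\Gamma$ is realised at the topmost level where the minimal subtree spanning $\Gamma$ branches, and bounding the expected scale of that level in terms of $\delta^{dia}(\Gamma)=\mathrm{diam}_d(\Gamma)$ reproduces the same harmonic-sum ($O(\log|\mathcal{L}|)$) estimate that underlies the per-pair FRT analysis. Establishing this diameter-preservation statement for subsets, rather than merely invoking the pairwise distortion, is the technical heart of the argument; everything else is bookkeeping with the two sandwich inequalities, Theorem~\ref{theorem:rhstMB}, and the min-selection over mixture components.
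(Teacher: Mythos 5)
Your proposal takes the same route as the paper's own proof --- the sandwich $\delta^{dia}(\Gamma)\le\delta(\Gamma)\le(|\Gamma|-1)\,\delta^{dia}(\Gamma)$ (Proposition-\ref{prop:divInequality}, which you additionally re-derive from the diversity axioms rather than cite), the FRT embedding of the induced metric into a mixture of r-{\sc hst} metrics, Theorem-\ref{theorem:rhstMB} applied per tree, and the min-selection over mixture components --- and the three factors compose exactly as you describe. The genuine difference is rigor, and it cuts in your favor. The appendix proof states the FRT guarantee as a deterministic pointwise inequality $d(\cdot,\cdot)\le O(\log|\mathcal{L}|)\,d^t(\cdot,\cdot)$ (Theorem-\ref{theo:treeMetric}), chains it into inequality (\ref{eq:diaIneuqlityFinal}), $\delta(\Gamma)\le O(\log|\Gamma|)(|\Gamma|-1)\,\delta^{dia}_t(\Gamma)$, and then asserts that combining this with Theorem-\ref{theorem:rhstMB} finishes the proof. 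As written this does not close: that chain only bounds the \emph{output's} true energy by its tree energy, and one must still convert the tree potentials $\delta^{dia}_t(\Gamma({\bf x}^*_c))$ evaluated at the \emph{optimum} back into $\delta(\Gamma({\bf x}^*_c))$; this cannot hold pointwise (FRT trees dominate, $d^t\ge d$), and is precisely the expected-stretch direction $\mathbb{E}_t[d^t]\le O(\log|\mathcal{L}|)\,d$ in which, as you observe, the maximum over pairs inside the diameter and the expectation over random trees do not commute. Your bookkeeping that keeps the two directions separate (domination for the returned labeling, expected stretch for ${\bf x}^*$), together with the set-diameter lemma you sketch --- in an {\sc hst} the diameter of $\Gamma$ is fixed by the level of the minimal subtree spanning $\Gamma$, and the probability that $\Gamma$ is first cut at scale $2^k$ is at most the probability that a ball of radius $\mathrm{diam}_d(\Gamma)$ around any fixed label of $\Gamma$ is cut at that scale, which yields the same harmonic sum as the pairwise FRT analysis --- is exactly what is needed to make the theorem sound. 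Once that lemma is written out in full, your argument is complete, and it repairs, rather than reproduces, the step the paper glosses over.
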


%\vspace{-4mm}
\section{Experiments}
We demonstrate the utility of the {\em parsimonious labeling} on both synthetic and real data. In case of synthetic data, we perform significant number of random experiments on big grid lattices and evaluate our method based on the energy and the time taken. To evaluate the modeling capabilities of the {\em parsimonious labeling}, we used it on two challenging real problems: (i) stereo matching, and (ii) image inpainting.  We use co-occurrence statistics based energy functional proposed by Ladicky {\em et al.}~\cite{Ladicky_COOC_2010} as our baseline. Theoretically, the only constraint that~\cite{Ladicky_COOC_2010} enforces on the clique potentials is that they must be monotonic in the label set. Therefore, can be regarded as the generalization of the parsimonious labeling. However, based on the synthetic and the real data results, supported by the theoretical guarantees, we show that the parsimonious labeling and the move making algorithm proposed in this work outperforms the more general work proposed in ~\cite{Ladicky_COOC_2010}. 

%\vspace{-2mm}
Recall that the energy functional of the parsimonious labeling problem is defined as:
 \begin{eqnarray}
\label{eq:parcEnergyAgain}
E({\bf x}) =   \sum_{i \in \mathcal{V} } \theta_i(x_i)  +\sum_{c \in \mathcal{C}} w_c\delta(\Gamma({\bf x}_c))
\end{eqnarray}

%\vspace{-2mm}
In our experiments, we frequently use the truncated linear metric. We define it below for the sake of completeness.

%\vspace{-3mm}
\begin{eqnarray}
\label{eq:truncatedLinear}
\theta_{i,j} (l_a, l_b) = \lambda \min(|l_a - l_b|, M), \forall l_a, l_b \in \mathcal{L}.
\end{eqnarray}
where $\lambda$ is the weight associated with the metric and $M$ is the truncation constant.

%\vspace{-2mm}
\subsection{Synthetic Data}
We consider following two cases: (i) when the hierarchical $P^n$ Potts model is given, and (ii) when a general diversity is given. In each of the two cases, we generate lattices of size $100 \times 100$, 20 labels, and use $\lambda=1$. The cliques are generated using a window of size $10 \times 10$ in a sliding window fashion. The unary potentials were randomly sampled from the uniform distribution defined over the interval $[0, 100]$. In the first case, we randomly generated $100$ lattices and random r-{\sc hst} trees associated with each lattice, ensuring that they satisfy the properties of the r-{\sc hst}. Each r-{\sc hst} was then converted into hierarchical $P^n$ Potts model by taking diameter diversity over each of them. This hierarchical $P^n$ Potts model was then used as the actual clique potential. We performed $100$ such experiments. On the other hand, in the second case, for a given value of the truncation $M$, we generated a truncated linear metric and $100$ lattices. We treated this metric as the induced metric of a diameter diversity and generated mixture of hierarchical $P^n$ Potts model using Algorithm-\ref{algo:divToHierPn}. Applied Algorithm-\ref{algo:hierPn} for the energy minimization over each hierarchical $P^n$ Potts model in the mixture and chose the one with the minimum energy. Notice that, in this case, the actual potential is the given diversity, not the generated hierarchical $P^n$ Potts models. Thus, the co-occurrence~\cite{Ladicky_COOC_2010} was given the actual diversity as the clique potentials. The method was evaluated using the given diversity as the clique potentials. We used four different values of the truncation factor $M \in \{ 1, 5, 10, 20\}$. For both the experiments, we used $7$ different values of $w_c$: $w_c \in \{0, 1, 2, 3, 4, 5, 100\}$.

The average energy and the time taken for both the methods and both the cases are shown in the figure (\ref{fig:synthetic}). It is evident from the figures that our method outperforms co-occurrence~\cite{Ladicky_COOC_2010} in both the cases, in term of time and the energy. In case the hierarchical $P^n$ Potts model is given, case (i), our method performs much better than co-occurrence~\cite{Ladicky_COOC_2010} because of the fact that it is directly minimizing the given potential. In case (ii), despite the fact that our method first approximates the given diversity into mixture of hierarchical $P^n$ Potts, it outperforms co-occurrence~\cite{Ladicky_COOC_2010}. This can be best supported by the fact that our algorithm has very tight multiplicative bound.

\begin{figure*}[ht!]
        \centering
        \begin{subfigure}[b]{0.2\linewidth}
                \includegraphics[width=\linewidth]{./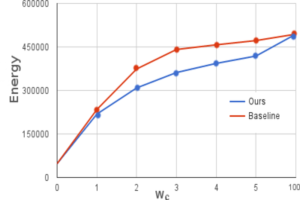}
                \caption{Energy}      
                %\label{subfig:truncLinear}          
        \end{subfigure}%
        ~
        \begin{subfigure}[b]{0.2\linewidth}
                \includegraphics[width=\linewidth]{./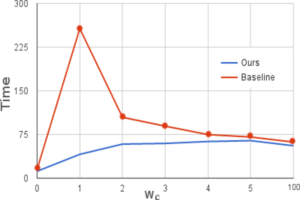}
                \caption{Time (in seconds)}      
                %\label{subfig:truncLinear}          
        \end{subfigure}%
        ~
        \begin{subfigure}[b]{0.2\linewidth}
                \includegraphics[width=\linewidth]{./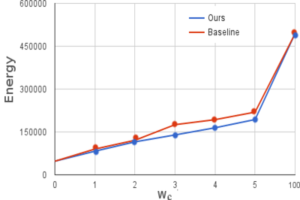}
                \caption{Energy}      
                %\label{subfig:truncLinear}          
        \end{subfigure}%
        ~
        \begin{subfigure}[b]{0.2\linewidth}
                \includegraphics[width=\linewidth]{./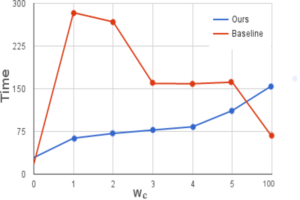}
                \caption{Time (in seconds)}      
                %\label{subfig:truncLinear}          
        \end{subfigure}%
\caption{\label{fig:synthetic}\emph{Synthetic (Blue: Our, Red: Co-occ~\cite{Ladicky_COOC_2010}). The x-axis for all the figures is the weight associated with the cliques $(w_c)$. Figures (a) and (b) are the plots for the energy and the time when the hierarchical $P^n$ Potts model was assumed to be known. Figures (c) and (d) are the energy and the time plots for the case when a diversity (diameter diversity over truncated linear metric) was given as the clique potentials. Notice that in both the cases our method outperforms the baseline~\cite{Ladicky_COOC_2010} both in terms of energy and time. Also, for very high value of $w_c=100$, both the methods converges to the same labeling. This is expected as a very high value of $w_c$ enforces rigid smoothness by assigning everything to the same label.}}
\end{figure*}

\begin{figure*}[ht!]
        \centering
        \begin{subfigure}[b]{0.15\linewidth}
                \includegraphics[width=\linewidth]{./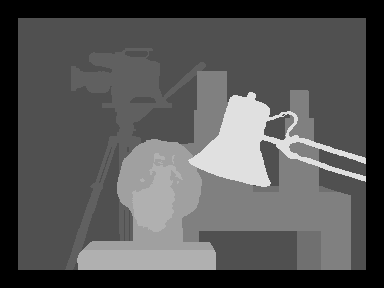}
                \caption{{{Tsukuba\\(Energy, Time)}}}      
                %\label{subfig:truncLinear}          
        \end{subfigure}%
%        ~
%        \begin{subfigure}[b]{0.12\linewidth}
%                \includegraphics[width=\linewidth]{./}
%                \caption{Ground truth}      
%                %\label{subfig:truncLinear}          
%        \end{subfigure}%
        ~
        \begin{subfigure}[b]{0.15\linewidth}
                \includegraphics[width=\linewidth]{./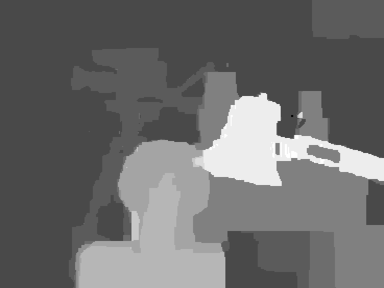}
                \caption{{Our\\ $({\bf 1195800}, 167)$}}      
                %\label{subfig:truncLinear}          
        \end{subfigure}%
        ~
        \begin{subfigure}[b]{0.15\linewidth}
                \includegraphics[width=\linewidth]{./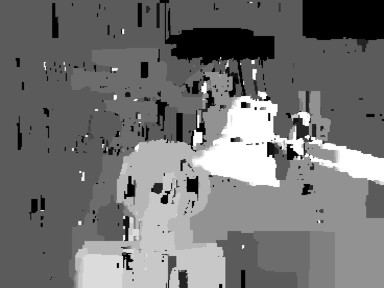}
                \caption{{Co-occ~\cite{Ladicky_COOC_2010}\\$(2202500, {\bf 95})$}}      
                %\label{subfig:truncLinear}          
        \end{subfigure}%
        ~
        \begin{subfigure}[b]{0.15\linewidth}
                \includegraphics[width=\linewidth]{./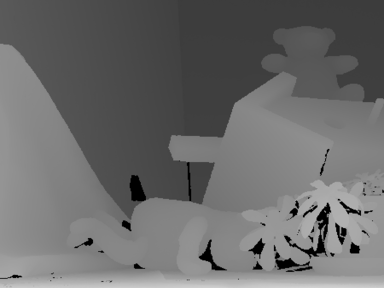}
                \caption{{ Teddy \\ (Energy, Time)}}      
                %\label{subfig:truncLinear}          
        \end{subfigure}%
%        ~
%        \begin{subfigure}[b]{0.12\linewidth}
%                \includegraphics[width=\linewidth]{./}
%                \caption{The superpixels}      
%                %\label{subfig:truncLinear}          
%        \end{subfigure}%
        ~
        \begin{subfigure}[b]{0.15\linewidth}
                \includegraphics[width=\linewidth]{./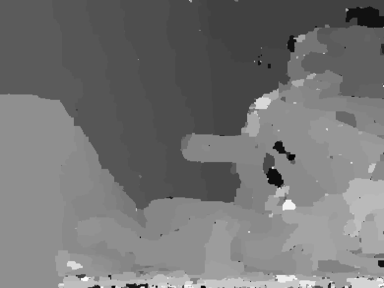}
                \caption{{Our \\$({\bf 1511206}, {\bf 287})$}}      
                %\label{subfig:truncLinear}          
        \end{subfigure}%
        ~
        \begin{subfigure}[b]{0.15\linewidth}
                \includegraphics[width=\linewidth]{./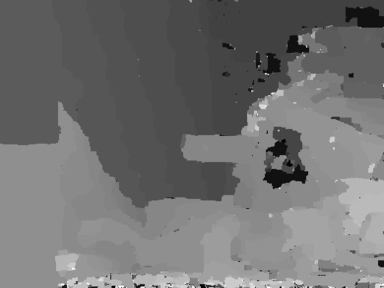}
                \caption{{Co-occ~\cite{Ladicky_COOC_2010}\\$(1519500,605)$}}
                %\label{subfig:truncLinear}          
        \end{subfigure}%
\caption{\label{fig:stereoExp}\emph{Stereo Matching Results. Figures (a) and (d) are the ground truth disparity for the `tsukuba' and `teddy' respectively. Notice that our method outperforms the baseline Co-ooc~\cite{Ladicky_COOC_2010} in both the cases in terms of energy. From figure (b) and (e), we can clearly see the effect of `parsimonious labeling' as the regions are smooth and the discontinuity is preserved. }}
\end{figure*}

\begin{figure*}[ht!]
        \centering
        \begin{subfigure}[b]{0.15\linewidth}
                \includegraphics[width=\textwidth]{./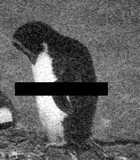}
                %\caption{The ground truth}      
                \caption{Penguin \\ (Energy, Time)}      
                %\label{subfig:truncLinear}          
        \end{subfigure}%
%        ~
%        \begin{subfigure}[b]{0.22\linewidth}
%                \includegraphics[width=\textwidth]{./}
%                \caption{The superpixels}      
%                %\label{subfig:truncLinear}          
%        \end{subfigure}%
        ~
        \begin{subfigure}[b]{0.15\linewidth}
                \includegraphics[width=\textwidth]{./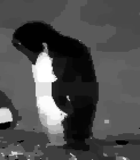}
                \caption{Our \\$({\bf 12516336},156)$}
                %\caption{our}      
                %\label{subfig:truncLinear}          
        \end{subfigure}%
        ~
        \begin{subfigure}[b]{0.15\linewidth}
                \includegraphics[width=\textwidth]{./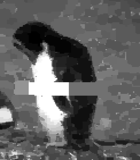}
                \caption{Co-oc~\cite{Ladicky_COOC_2010} \\$(14711806, {\bf 110})$}
                %\caption{Co-occurence~\cite{Ladicky_COOC_2010}}      
                %\label{subfig:truncLinear}          
        \end{subfigure}%
        ~
        \begin{subfigure}[b]{0.15\linewidth}
                \includegraphics[width=\textwidth]{./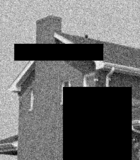}
                \caption{House \\ (Energy, Time)}
                %\caption{The ground truth}      
                %\label{subfig:truncLinear}          
        \end{subfigure}%
%        ~
%        \begin{subfigure}[b]{0.22\linewidth}
%                \includegraphics[width=\textwidth]{./}
%                \caption{The superpixels}      
%                %\label{subfig:truncLinear}          
%        \end{subfigure}%
        ~
        \begin{subfigure}[b]{0.15\linewidth}
                \includegraphics[width=\textwidth]{./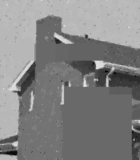}
                \caption{Our \\$({\bf 32799162},1014)$}
                %\caption{Our}      
                %\label{subfig:truncLinear}          
        \end{subfigure}%
        ~
        \begin{subfigure}[b]{0.15\linewidth}
                \includegraphics[width=\textwidth]{./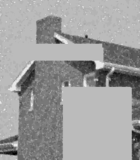}
                \caption{Co-oc~\cite{Ladicky_COOC_2010} \\$(38597848,{\bf 367})$}
                %\caption{Co-occurence~\cite{Ladicky_COOC_2010}}      
                %\label{subfig:truncLinear}          
        \end{subfigure}%
        \caption{\label{fig:inpaintExp}\emph{Image inpainting results. Figures (a) and (d) are the input images of `penguin' and `house' with added noise and obscured regions. Our method, (b) and (e), outperforms the baseline Co-ooc~\cite{Ladicky_COOC_2010} in both the cases in terms of energy. Figure (b) clearly shows the effect of `parsimonious labeling' as the regions are smooth and the discontinuity is preserved.}}
\end{figure*}

%\vspace{-2mm}
\subsection{Real Data}

%\vspace{-1mm}
In case of real data, the high-order cliques we used are the superpixels obtained using the mean-shift method \cite{MeanShift_PAMI_02}. The clique potentials used for the experiments are the diameter diversity of the {\em truncated linear} metric (equation (\ref{eq:truncatedLinear})). A truncated linear metric enforces smoothness in the pairwise setting, therefore, the diameter diversity of the truncated linear metric will naturally enforce smoothness in the high-order cliques, which is a desired cue for the two applications we are dealing with. In both the real experiments we used the following form of $w_c$ (for the high order cliques): $w_c = \exp^{-\frac{\rho({\bf x}_c)}{\sigma^2}}$, where $\rho({\bf x}_c)$ is the variance of the intensities of the pixels in the clique ${\bf x}_c$ and $\sigma$ is a hyperparameter.

%\vspace{-3mm}
\subsubsection{Stereo Matching} 

%\vspace{-2mm}
Given two rectified stereo pair of images, the problem of stereo matching is to find the disparity (gives the notion of depth) of each pixel in the reference image \cite{Tappen_Stereo_03,Szeliski_CompStudy_08}. In this work, we extended the standard setting of the stereo matching \cite{Szeliski_CompStudy_08} to high-order cliques and tested our method to the images, `tsukuba' and `teddy', from the widely used Middlebury stereo data set~\cite{Middlebury_Stereo_02}. The unaries were computed as the $L1-$norm of the difference in the {\sc rgb} values of the left and the right image pixels. Notice that the index for the right image pixel is the index for the left image pixel minus the disparity, which is the label. In case of `teddy' the unaries were trucated at 16. The weights $w_c$ for the pairwise cliques are set to be proportional to the $L1-$norm of the gradient $\Delta$ of the intensities of the neighbouring pixels. In case of `tsukuba', if $\Delta < 8 $, $w_c = 2$, otherwise $w_c = 1$. In case of `teddy', if $\Delta < 10 $, $w_c = 3$, otherwise $w_c = 1$.  As mentioned earlier, $w_c$ for the high-order cliques is set to be proportional to the variance. We used different values of $\sigma$, $\lambda$, and the truncation $M$. Because of the space constraints we are showing results for the following setting: for `tsukuba', $\lambda = 20$, $\sigma=100$ and $M=10$; for `teddy', $\lambda = 10$, $\sigma=1000$ and $M=1$. Figure (\ref{fig:stereoExp}) shows the results obtained. Notice that our method significantly outperforms the co-occurrence~\cite{Ladicky_COOC_2010} based method in terms of energy for both, `tsukuba' and `teddy'. {\em We show similar promising results for different parameters in the Appendix}.

%\vspace{-4mm}
\subsubsection{Image Inpainting and Denoising} 

%\vspace{-2mm}
Given an image with added noise and obscured regions (regions with missing pixels), the problem is to denoise the image and fill the obscured regions such that it is consistent with the surroundings. We performed this experiment on the images, `penguin' and `house', from the widely used Middlebury data set. The images under consideration are gray scale, therefore, there are 256 labels in the interval $[0,255]$, each representing an intensity value. The unaries for each pixel (or node) corresponding to a particular label, is the squared difference between the label and the intensity value at that pixel. The weights $w_c$ for the pairwise cliques are all set to one. For the high-order cliques, as mentioned earlier, $w_c$ are chosen to be proportional to the variance of the intensity of the participating pixels. We used different values of $\sigma$, $\lambda$, and the truncation $M$. Because of the space constraints we are showing results for the following setting: `penguin', the $\lambda = 40$, $\sigma=10000$ and $M=40$; for `house', the $\lambda = 30$, $\sigma=10$ and $M=40$. Figure~\ref{fig:inpaintExp} shows the results obtained. Notice that our method significantly outperforms the co-occurrence based method~\cite{Ladicky_COOC_2010} in terms of energy for both, `penguin' and `house'. {\em We show similar promising results for different parameters in the Appendix}.

%\vspace{-4mm}

\section{Discussion}

%\vspace{-4mm}
We proposed a new family of discrete optimization {\em parsimonious labeling}, a novel hierarchical $P^n$ Potts model, and move making algorithms to minimize energy functional for them. We gave very tight multiplicative bounds for the move making algorithms, applicable to all the `diversities'. An interesting direction for future research would be to explore different `diversities' and propose algorithms specific to them with better bounds. Another interesting future work would be to directly approximate `diversities' into mixture of hierarchical $P^n$ Potts model, without using the intermediate r-{\sc hst}.

\newpage
\onecolumn
\appendix    

\section{Additional Real Data Experiments and Analysis}
Recall that the energy functional of the parsimonious labeling problem is defined as:
 \begin{eqnarray}
%\label{eq:parcEnergyAgain}
E({\bf x}) =   \sum_{i \in \mathcal{V} } \theta_i(x_i)  +\sum_{c \in \mathcal{C}} w_c\delta(\Gamma({\bf x}_c))
\end{eqnarray}
where $\delta()$ is the diversity function defined over the set of unique labels present in the clique ${\bf x}_c$. In our experiments, we frequently use the truncated linear metric. We define it below for the sake of completeness.
\begin{eqnarray}
\label{eq:truncatedLinear1}
\theta_{i,j} (l_a, l_b) = \lambda \min(|l_a - l_b|, M), \forall l_a, l_b \in \mathcal{L}.
\end{eqnarray}
where $\lambda$ is the weight associated with the metric and $M$ is the truncation constant.

In case of real data, the high-order cliques are defined over the superpixels obtained using the mean-shift method \cite{MeanShift_PAMI_02}. The clique potentials used for the experiments are the diameter diversity of the {\em truncated linear} metric. A truncated linear metric (equation (\ref{eq:truncatedLinear1})) enforces smoothness in the pairwise setting, therefore, the diameter diversity of the truncated linear metric will naturally enforce smoothness in the high-order cliques, which is a desired cue for the two applications we are dealing with. 

In all the real experiments we use the following form of $w_c$ (for the high order cliques): $w_c = \exp^{-\frac{\rho({\bf x}_c)}{\sigma^2}}$, where $\rho({\bf x}_c)$ is the variance of the intensities of the pixels in the clique ${\bf x}_c$ and $\sigma$ is a hyperparameter.

In order to show the modeling capabilities of the {\em parsimonious labeling} we compare our results with the well known $\alpha-$expansion \cite{Veksler_PhDthesis_1999}, {\sc trws} \cite{Kolmogorov_TRWS_06PAMI}, and the Co-occ \cite{Ladicky_COOC_2010}. We also show the effect of clique sizes, which in our case are the superpixels obtained using the mean-shift algorithm, and the parameter $w_c$ associated with the cliques, for the purpose of understanding the behaviour of the {\em parsimonious labeling}.

\subsection{Stereo Matching}
Please refer to the paper for the description of the stereo matching problem. Figures (\ref{fig:stereoTeddyComparison}) and (\ref{fig:stereoTsukubaComparison}) shows the comparisons between different methods for the `teddy' and `tsukuba' examples, respectively. It can be clearly seen that the {\em parsimonious labeling} gives better results compared to all the other three methods.  The parameter $w_c$ can be thought of as the trade off between the influence of the pairwise and the high order cliques. Finding the best setting of $w_c$ is very important. The effect of the parameter $w_c$, which is done by changing $\sigma$, is shown in the figure (\ref{fig:stereoTsukubaSigma}). Similarly, the cliques have great impact on the overall result. Large cliques and high value of $w_c$ will result in over smoothing. In order to visualize this, we show the effect of clique size in the figure (\ref{fig:stereoTeddySuperpixels}).

\begin{figure*}[ht!]
        \centering
        \begin{subfigure}[b]{0.18\linewidth}
                \includegraphics[width=\linewidth]{./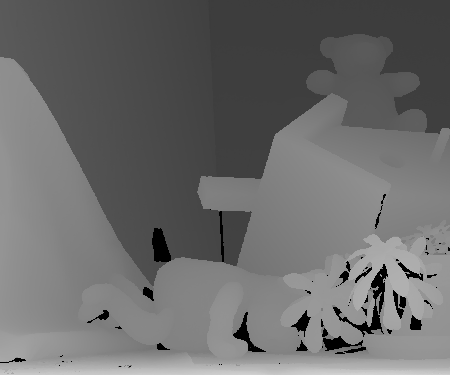}
                \caption{Gnd Truth}      
                %\label{subfig:truncLinear}          
        \end{subfigure}%
        ~
        \begin{subfigure}[b]{0.18\linewidth}
                \includegraphics[width=\linewidth]{./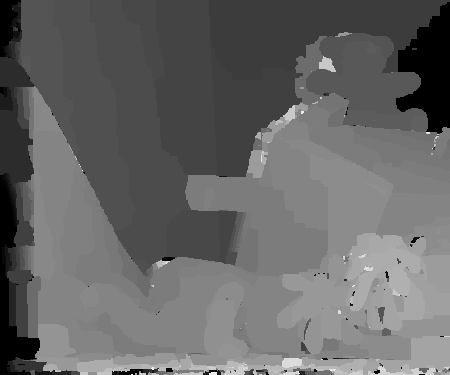}
                \caption{$\alpha-$ exp}      
                %\label{subfig:truncLinear}          
        \end{subfigure}%
        ~
        \begin{subfigure}[b]{0.18\linewidth}
                \includegraphics[width=\linewidth]{./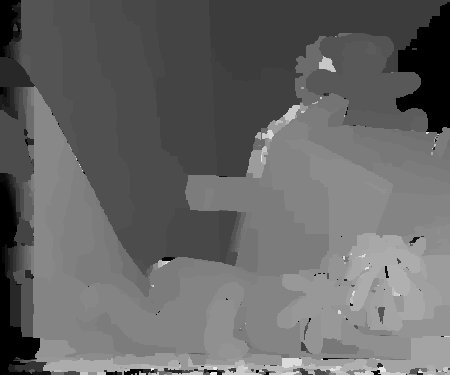}
                \caption{{\sc trws}}      
                %\label{subfig:truncLinear}          
        \end{subfigure}%
        ~
        \begin{subfigure}[b]{0.18\linewidth}
                \includegraphics[width=\linewidth]{./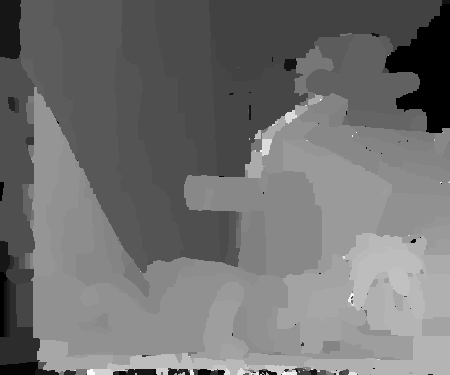}
                \caption{Co-occ}      
                %\label{subfig:truncLinear}          
        \end{subfigure}%
        ~
        \begin{subfigure}[b]{0.18\linewidth}
                \includegraphics[width=\linewidth]{./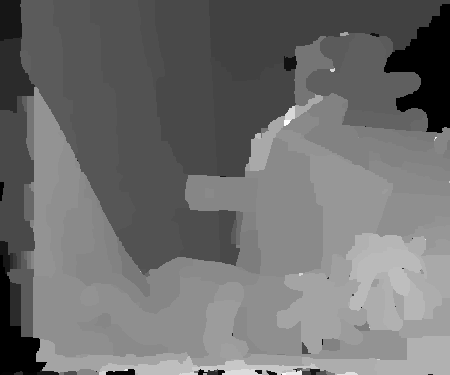}
                \caption{Our Method}      
                %\label{subfig:truncLinear}          
        \end{subfigure}%

\caption{\label{fig:stereoTeddyComparison}\emph{Comparison of all the methods for the stereo matching of `teddy'. We used the optimal setting of the parameters proposed in the well known Middlebury webpage and \cite{Szeliski_CompStudy_08}. The above results are obtained using $\sigma = 10^2$ for the Co-occ and our method. Clearly, our method gives much smooth results while keeping the underlying shape intact. This is because of the cliques and the corresponding potentials (diversities) used. The diversities enforces smoothness over the cliques while $\sigma$ controls this smoothness in order to avoid over smooth results. }}
\end{figure*}

\begin{figure*}[ht!]
        \centering
        \begin{subfigure}[b]{0.18\linewidth}
                \includegraphics[width=\linewidth]{./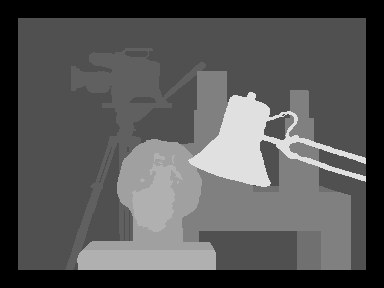}
                \caption{Gnd Truth}      
                %\label{subfig:truncLinear}          
        \end{subfigure}%
        ~
        \begin{subfigure}[b]{0.18\linewidth}
                \includegraphics[width=\linewidth]{./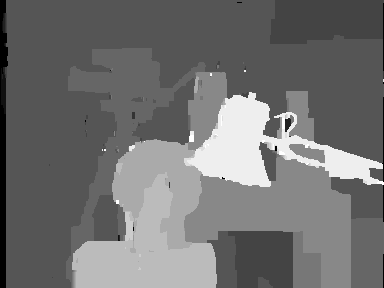}
                \caption{$\alpha-$ exp}      
                %\label{subfig:truncLinear}          
        \end{subfigure}%
        ~
        \begin{subfigure}[b]{0.18\linewidth}
                \includegraphics[width=\linewidth]{./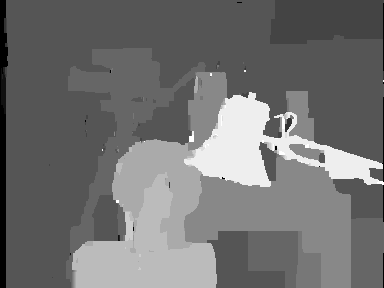}
                \caption{{\sc trws}}      
                %\label{subfig:truncLinear}          
        \end{subfigure}%
        ~
        \begin{subfigure}[b]{0.18\linewidth}
                \includegraphics[width=\linewidth]{./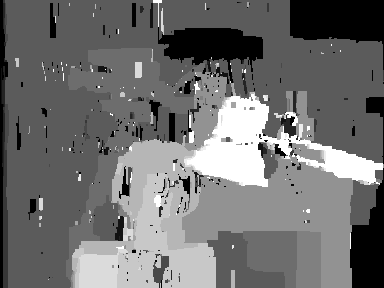}
                \caption{Co-occ}      
                %\label{subfig:truncLinear}          
        \end{subfigure}%
        ~
        \begin{subfigure}[b]{0.18\linewidth}
                \includegraphics[width=\linewidth]{./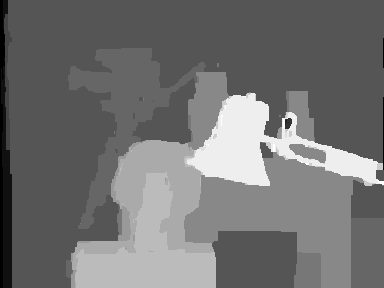}
                \caption{Our Method}      
                %\label{subfig:truncLinear}          
        \end{subfigure}%

\caption{\label{fig:stereoTsukubaComparison}\emph{Comparison of all the methods for the stereo matching of `tsukuba'. We used the optimal setting of the parameters proposed in the well known Middlebury webpage and \cite{Szeliski_CompStudy_08}. The above results are obtained using $\sigma = 10^2$ for the Co-occ and our method. We can see that the disparity obtained using our method is closest to the ground truth compared to all other methods. In our method, the background is uniform (under the table also), the camera shape is closest to the ground truth camera, and the face disparity is also closest to the ground truth compared to other methods.}}
\end{figure*}

\begin{figure*}[ht!]
        \centering
%        \begin{subfigure}[b]{0.18\linewidth}
%                \includegraphics[width=\linewidth]{./}
%                \caption{$\sigma = 10^2$}      
%                %\label{subfig:truncLinear}          
%        \end{subfigure}%
%        ~
        \begin{subfigure}[b]{0.18\linewidth}
                \includegraphics[width=\linewidth]{./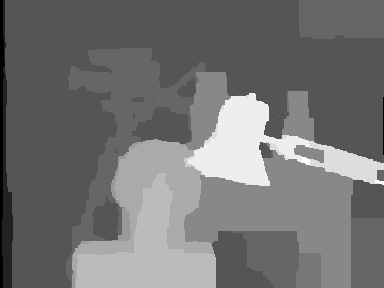}
                \caption{$\sigma = 10^3$}           
                %\label{subfig:truncLinear}          
        \end{subfigure}%
        ~
        \begin{subfigure}[b]{0.18\linewidth}
                \includegraphics[width=\linewidth]{./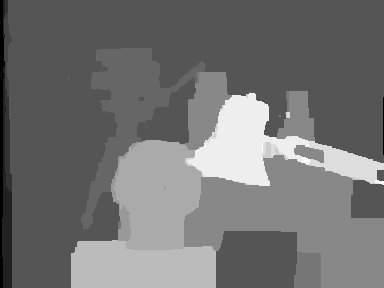}
                \caption{$\sigma = 10^4$}         
                %\label{subfig:truncLinear}          
        \end{subfigure}%
\caption{\label{fig:stereoTsukubaSigma}\emph{Effect of $\sigma$ in the {\em parsimonious labeling}. All the parameters are same except for the $\sigma$. Note that as we increase the $\sigma$, the $w_c$ increases, which in turn results in over smoothing.}}
\end{figure*}

\begin{figure*}[ht!]
        \centering
        \begin{subfigure}[b]{0.18\linewidth}
                \includegraphics[width=\linewidth]{./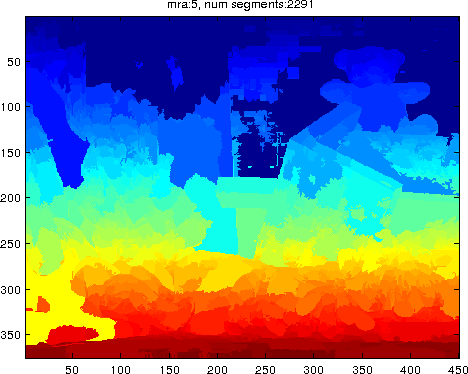}
                %\caption{Gnd Truth}      
                %\label{subfig:truncLinear}          
        \end{subfigure}%
        ~
        \begin{subfigure}[b]{0.18\linewidth}
                \includegraphics[width=\linewidth]{./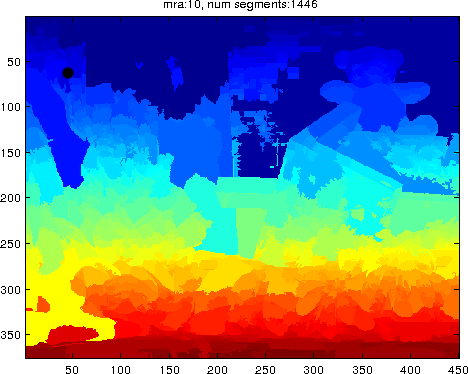}
                %\caption{$\alpha-$ exp}      
                %\label{subfig:truncLinear}          
        \end{subfigure}%
        ~
        \begin{subfigure}[b]{0.18\linewidth}
                \includegraphics[width=\linewidth]{./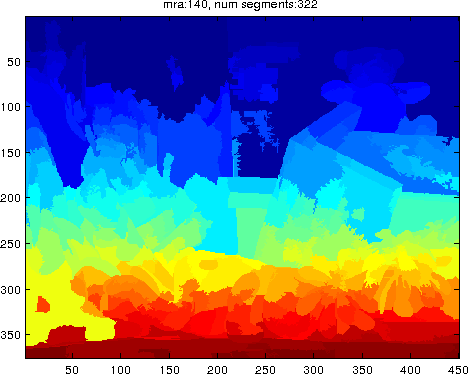}
                %\caption{{\sc trws}}      
                %\label{subfig:truncLinear}          
        \end{subfigure}%
        ~
        \begin{subfigure}[b]{0.18\linewidth}
                \includegraphics[width=\linewidth]{./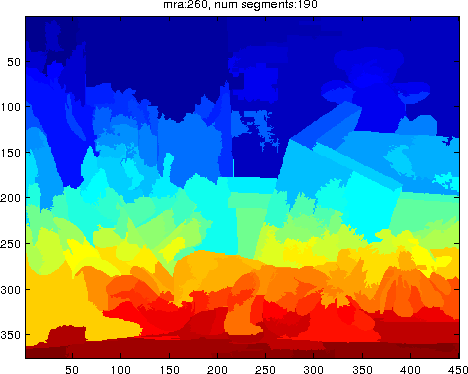}
                %\caption{Co-occ}      
                %\label{subfig:truncLinear}          
        \end{subfigure}%
        ~
        \begin{subfigure}[b]{0.18\linewidth}
                \includegraphics[width=\linewidth]{./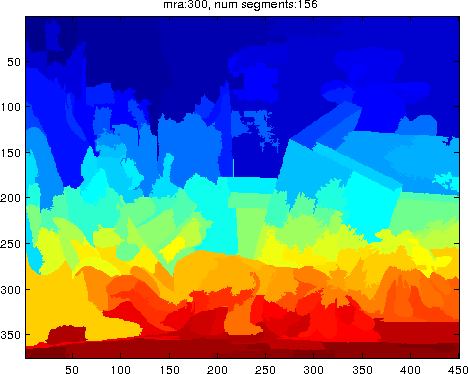}
                %\caption{Co-occ}
                %\label{subfig:truncLinear}          
        \end{subfigure}%
        \\
        \begin{subfigure}[b]{0.18\linewidth}
                \includegraphics[width=\linewidth]{./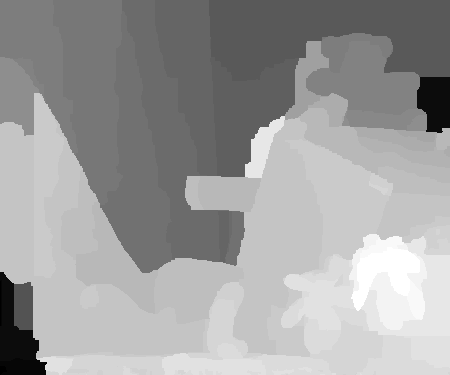}
                %\caption{Gnd Truth}      
                %\label{subfig:truncLinear}          
        \end{subfigure}%
        ~
        \begin{subfigure}[b]{0.18\linewidth}
                \includegraphics[width=\linewidth]{./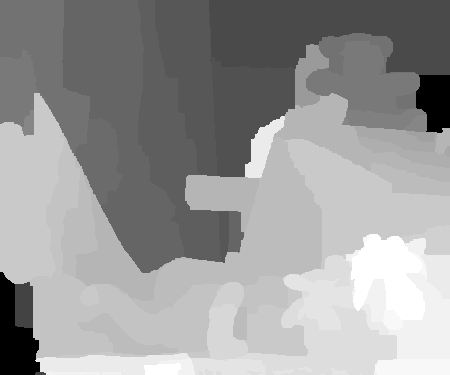}
                %\caption{$\alpha-$ exp}      
                %\label{subfig:truncLinear}          
        \end{subfigure}%
        ~
        \begin{subfigure}[b]{0.18\linewidth}
                \includegraphics[width=\linewidth]{./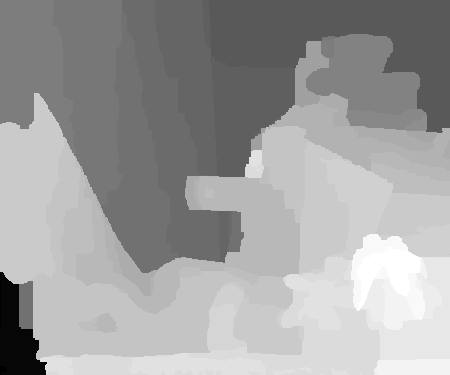}
                %\caption{{\sc trws}}      
                %\label{subfig:truncLinear}          
        \end{subfigure}%
        ~
        \begin{subfigure}[b]{0.18\linewidth}
                \includegraphics[width=\linewidth]{./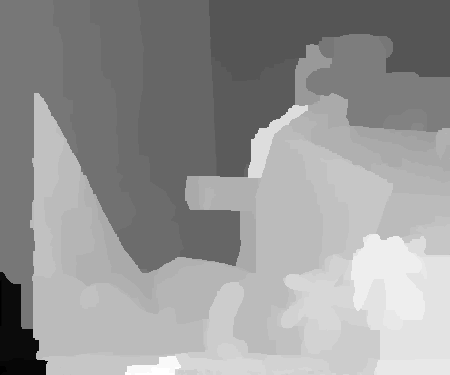}
                %\caption{Co-occ}      
                %\label{subfig:truncLinear}          
        \end{subfigure}%
        ~
        \begin{subfigure}[b]{0.18\linewidth}
                \includegraphics[width=\linewidth]{./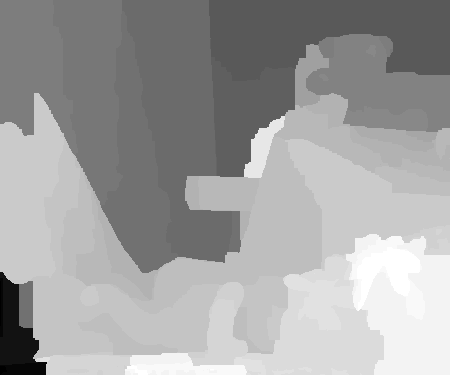}
                %\caption{Co-occ}
                %\label{subfig:truncLinear}          
        \end{subfigure}%

\caption{\label{fig:stereoTeddySuperpixels}\emph{Effect of clique size (superpixels). The top row shows the cliques (superpixels) used and the bottom row shows the stereo matching using these cliques. As we go from left to right, the minimum number of pixels that a superpixel must contain increases. All the other parameters are the same. In order to increase the weight $w_c$, we use high value of $\sigma$, which is $\sigma = 10^5$ in all the above cases.}}
\end{figure*}

\subsection{Image Inpainting and Denoising} 
Please refer to the paper for the description of the image inpainting and the denoising problem. Figures (\ref{fig:inpaintPenguin}) and (\ref{fig:inpaintHouse}) shows the comparisons between the different methods for the `penguin' and the `house' examples, respectively. It can be clearly seen that the {\em parsimonious labeling} gives highly promising results compared to all the other methods.

\begin{figure*}[ht!]
        \centering
        \begin{subfigure}[b]{0.17\linewidth}
                \includegraphics[width=\linewidth]{./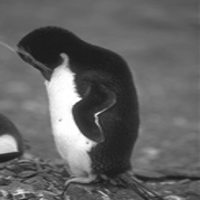}
                \caption{Original}      
                %\label{subfig:truncLinear}          
        \end{subfigure}%
        ~
        \begin{subfigure}[b]{0.17\linewidth}
                \includegraphics[width=\linewidth]{./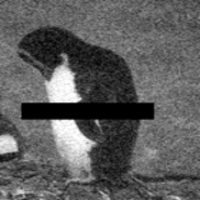}
                \caption{Input}      
                %\label{subfig:truncLinear}          
        \end{subfigure}%
        ~
        \begin{subfigure}[b]{0.17\linewidth}
                \includegraphics[width=\linewidth]{./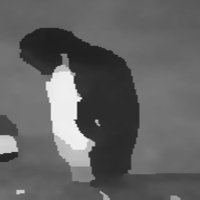}
                \caption{$\alpha-$ exp}      
                %\label{subfig:truncLinear}          
        \end{subfigure}%
        ~
        \begin{subfigure}[b]{0.17\linewidth}
                \includegraphics[width=\linewidth]{./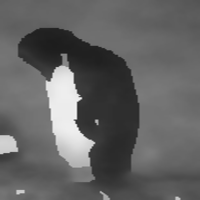}
                \caption{{\sc trws}}      
                %\label{subfig:truncLinear}          
        \end{subfigure}%
        ~
        \begin{subfigure}[b]{0.17\linewidth}
                \includegraphics[width=\linewidth]{./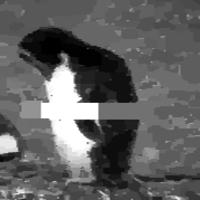}
                \caption{Co-occ}      
                %\label{subfig:truncLinear}          
        \end{subfigure}%
        ~
        \begin{subfigure}[b]{0.17\linewidth}
                \includegraphics[width=\linewidth]{./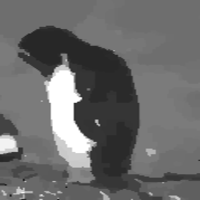}
                \caption{Our}      
                %\label{subfig:truncLinear}          
        \end{subfigure}%
\caption{\label{fig:inpaintPenguin}\emph{Comparison of all the methods for the image inpainting and denoising problem of the `penguin'. Notice that our method recovers the hand of the penguin very smoothly. In other methods, except Co-oc, the ground is over-smooth while our method recovers the ground quite well compared to others.}}
\end{figure*}

\begin{figure*}[ht!]
        \centering
        \begin{subfigure}[b]{0.17\linewidth}
                \includegraphics[width=\linewidth]{./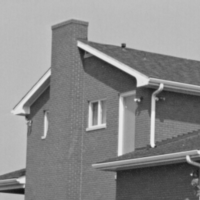}
                \caption{Original}      
                %\label{subfig:truncLinear}          
        \end{subfigure}%
        ~
        \begin{subfigure}[b]{0.17\linewidth}
                \includegraphics[width=\linewidth]{./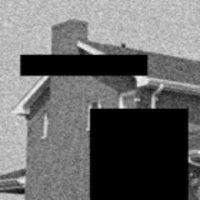}
                \caption{Input}      
                %\label{subfig:truncLinear}          
        \end{subfigure}%
        ~
        \begin{subfigure}[b]{0.17\linewidth}
                \includegraphics[width=\linewidth]{./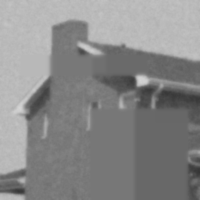}
                \caption{$\alpha-$ exp}      
                %\label{subfig:truncLinear}          
        \end{subfigure}%
        ~
        \begin{subfigure}[b]{0.17\linewidth}
                \includegraphics[width=\linewidth]{./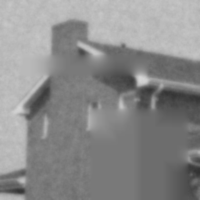}
                \caption{{\sc trws}}      
                %\label{subfig:truncLinear}          
        \end{subfigure}%
        ~
        \begin{subfigure}[b]{0.17\linewidth}
                \includegraphics[width=\linewidth]{./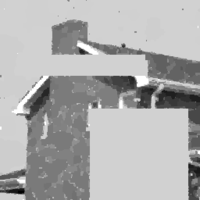}
                \caption{Co-occ}      
                %\label{subfig:truncLinear}          
        \end{subfigure}%
        ~
        \begin{subfigure}[b]{0.17\linewidth}
                \includegraphics[width=\linewidth]{./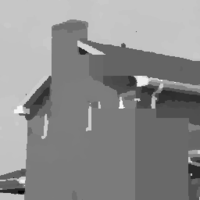}
                \caption{Our}      
                %\label{subfig:truncLinear}          
        \end{subfigure}%
\caption{\label{fig:inpaintHouse}\emph{Comparison of all the methods for the image inpainting and denoising problem of the `house'.}}
\end{figure*}

\section{Proof of Theorems}
\paragraph{The labeling problem.} As already defined in the paper, consider a random field defined over a set of random variables ${\bf x} = \{x_1, \cdots, x_N\}$ arranged in a predefined lattice $\mathcal{V} = \{1, \cdots, N\}$. Each random variable can take a value from a discrete label set $\mathcal{L} = \{l_1, \cdots, l_H \}$. The energy functional corresponding to a labeling ${\bf x}$ is defined as:
\begin{eqnarray}
E({\bf x}) =   \sum_{i \in \mathcal{V} } \theta_i(x_i)  +\sum_{c \in \mathcal{C}} \theta_c({\bf x}_c)
\label{eq:mrfGeneral1}
\end{eqnarray}
where $\theta_i(x_i)$ is any arbitrary unary potential, and $\theta_c({\bf x}_c)$ is a clique potential for assigning the labels ${\bf x}_c$ to the variables in the clique $c$. 

\paragraph{Notations.} $\Gamma({\bf x}_c)$ denotes the set of unique labels present in the clique ${\bf x}_c$. $\delta(\Gamma({\bf x}_c))$ and $\delta^{dia}(\Gamma({\bf x}_c))$ denotes the {\em diversity} and the {\em diameter diversity} of the unique labels present in the clique ${\bf x}_c$, respectively. $\mathcal{M} = \max_c |{\bf x}_c|$ is the size of the largest maximal-clique and $|\mathcal{L}|$ is the number of labels.

\subsection{Multiplicative Bound of the Hierarchical Move Making Algorithm for the Hierarchical $P^n$ Potts Model - Proof of Theorem-\ref{theorem:rhstMB}}

%\begin{theorem}
%\label{theorem:rhstMB}
%The move making algorithm for the hierarchical $P^n$ Potts model, Algorithm-\ref{algo:hierPn}, gives the multiplicative bound of $\left( \frac{r}{r-1} \right)   \min(\mathcal{M},|\mathcal{L}|)$ with respect to the global minima. Here, $\mathcal{M}$ is the size of the largest maximal-clique and $|\mathcal{L}|$ is the number of labels.
%\end{theorem}

\begin{proof}
Let ${\bf x}^*$ be the optimal labeling of the given hierarchical $P^n$ Potts model based labeling problem.  Note that any node $p$ in the underlying r-{\sc hst} represents a cluster (subset) of labels. For each node $p$ in the r-{\sc hst} we define following sets using ${\bf x}^*$:
\begin{eqnarray}
\mathcal{L}^p = \{ l_i | l_i \in \mathcal{L}, i \in p \}, \nonumber \\
\mathcal{V}^p = \{ x_i : x_i^* \in \mathcal{L}^p \}, \nonumber \\
\mathcal{I}^p = \{ c: {\bf x}_c \subseteq \mathcal{V}^p \}, \nonumber \\
\mathcal{B}^p = \{ c : {\bf x}_c \cap \mathcal{V}^p \neq \emptyset, {\bf x}_c \nsubseteq  \mathcal{V}^p \}, \nonumber \\
\mathcal{O}^p = \{ c : {\bf x}_c \cap \mathcal{V}^p = \emptyset \}.
\end{eqnarray}
In other words, $\mathcal{L}^p$ is the set of labels in the cluster at $p^{th}$ node, $\mathcal{V}^p$ is the set of nodes whose optimal label lies in the subtree rooted at $p$, $\mathcal{I}^p$ is the set of cliques such that the optimal labeling lies in the subtree rooted at $p$, $\mathcal{B}^p$ is the set of cliques (boundary cliques) such that $\forall {\bf x}_c \in \mathcal{B}^p, \exists \{x_i, x_j\} \in {\bf x}_c : x_i^* \in \mathcal{L}^p, x_j^* \notin \mathcal{L}^p$, and $\mathcal{O}^p$ is the set of outside cliques such that the optimal assignment for all the nodes belongs to the set $\mathcal{L} \setminus \mathcal{L}^p$. Let's define ${\bf x}^p$ as the labeling at node $p$. We prove the following lemma relating ${\bf x}^*$ and ${\bf x}^p$.

\begin{lemma} Let ${\bf x}^p$ be the labeling at node p, ${\bf x}^*$ be the optimal labeling of the given hierarchical $P^n$ Potts model, and $\delta^{dia}(\Gamma({\bf x}_c^p))$ be the diameter diversity based clique potential defined as $\max_{l_i, l_j \in \mathcal{L}^p} d^t(l_i, l_j), \forall p$, where $d^t(.,.)$ is the tree metric defined over the given r-{\sc hst}, then the following bound holds true at any node $p$ of the r-{\sc hst}.
\begin{eqnarray}
\sum_{c \in \mathcal{I}^p} \delta^{dia}(\Gamma({\bf x}_c^p)) \leq   \left( \frac{r}{r-1} \right) \min(\mathcal{M},|\mathcal{L}|)  \sum_{c \in \mathcal{I}^p} \delta^{dia}(\Gamma({\bf x}_c^*))
\end{eqnarray}
\end{lemma}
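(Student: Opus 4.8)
The plan is to prove the lemma by induction on the nodes of the r-{\sc hst}, working from the leaves upward to the root, mirroring the order in which Algorithm-\ref{algo:hierPn} produces the labelings ${\bf x}^p$. The invariant maintained at every node $p$ is precisely the displayed inequality over the interior cliques $\mathcal{I}^p$. For the base case, let $p$ be a leaf, so $\mathcal{L}^p = \{p\}$. Then every clique $c \in \mathcal{I}^p$ has all of its variables carrying the single optimal label $p$, giving $\delta^{dia}(\Gamma({\bf x}_c^*)) = 0$; the algorithm also sets $x_i^p = p$ for all $i$, so $\delta^{dia}(\Gamma({\bf x}_c^p)) = 0$, and both sides vanish.

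For the inductive step I would first split $\mathcal{I}^p$ using the children $\eta(p) = \{q_1,\dots,q_m\}$. Since the clusters $\mathcal{L}^{q_k}$ partition $\mathcal{L}^p$, the sets $\mathcal{V}^{q_k}$ partition $\mathcal{V}^p$ and $\mathcal{I}^p$ decomposes into the cliques interior to a single child, $\bigcup_k \mathcal{I}^{q_k}$, plus the straddling cliques $\mathcal{J}^p = \mathcal{I}^p \setminus \bigcup_k \mathcal{I}^{q_k}$, whose optimal labels meet at least two distinct child clusters. The key passage is then from the realized clique potential to the $P^n$ Potts surrogate of the fusion move: by the monotonicity property of the diameter diversity one has $\delta^{dia}(\Gamma({\bf x}_c^p)) \leq \bar{\theta}_c(\hat{\bf t}_c^p)$ for every clique, because either all variables of $c$ select the same child $k$ (equality at $\gamma_k^p$) or they do not, in which case $\Gamma({\bf x}_c^p) \subseteq \mathcal{L}^p$ forces $\delta^{dia}(\Gamma({\bf x}_c^p)) \leq \delta^{dia}(\mathcal{L}^p) = \gamma^p_{max}$.

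To control the fusion labeling $\hat{\bf t}^p$ returned by $\alpha$-expansion, I would introduce the comparison fusion labeling $\tilde{\bf t}^p$ that routes each variable to the child whose cluster contains its optimal label, i.e. $\tilde{t}_i^p = k$ whenever $x_i^* \in \mathcal{L}^{q_k}$. Under $\tilde{\bf t}^p$ an interior clique $c \in \mathcal{I}^{q_k}$ has all its variables choosing child $k$, so its surrogate cost equals $\gamma_k^p = \delta^{dia}(\Gamma({\bf x}_c^{q_k}))$, which the induction hypothesis at $q_k$ already bounds by $\left(\frac{r}{r-1}\right)\min(\mathcal{M},|\mathcal{L}|)\,\delta^{dia}(\Gamma({\bf x}_c^*))$ after summation. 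A straddling clique $c \in \mathcal{J}^p$ instead pays $\gamma^p_{max} = \delta^{dia}(\mathcal{L}^p)$, and here the r-{\sc hst} geometry does the work: writing $\ell_p$ for the edge length from $p$ to a child, two optimal labels lying in distinct child subtrees are at tree distance at least $2\ell_p$, so $\delta^{dia}(\Gamma({\bf x}_c^*)) \geq 2\ell_p$, while the decay of edge lengths by a factor at least $r$ gives $\delta^{dia}(\mathcal{L}^p) \leq 2\ell_p\left(\frac{r}{r-1}\right)$, hence $\delta^{dia}(\mathcal{L}^p) \leq \left(\frac{r}{r-1}\right)\delta^{dia}(\Gamma({\bf x}_c^*))$. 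Combining the two cases shows that the comparison labeling satisfies $\sum_{c\in\mathcal{I}^p}\bar{\theta}_c(\tilde{\bf t}_c^p) \leq \left(\frac{r}{r-1}\right)\min(\mathcal{M},|\mathcal{L}|)\sum_{c\in\mathcal{I}^p}\delta^{dia}(\Gamma({\bf x}_c^*))$. The last step then invokes the multiplicative bound of $\alpha$-expansion for the $P^n$ Potts model, equations (\ref{eq:multiplicativeBound})--(\ref{eq:pnPottsLambda}), on the fusion energy (\ref{eq:pnFusion})--(\ref{eq:pnPottsFusion}) with $\tilde{\bf t}^p$ as the reference, and feeds the result back through the monotonicity inequality above to obtain the bound at $p$.

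The step I expect to be the main obstacle is making this last comparison rigorous without the multiplicative constant compounding from level to level. Applied naively, the $\alpha$-expansion bound multiplies the surrogate cost of $\tilde{\bf t}^p$ by a fresh factor at node $p$, whereas the induction hypothesis already carries a factor $\left(\frac{r}{r-1}\right)\min(\mathcal{M},|\mathcal{L}|)$ on the interior cliques, so a careless combination would square the constant. The resolution I would pursue is to charge each clique its $\frac{r}{r-1}$ factor only once, at the node where its optimal label set first splits (its least common ancestor in the r-{\sc hst}), so that the effective $\lambda$ of every fusion $P^n$ Potts instance is $\frac{r}{r-1}$ rather than the raw ratio $\gamma^p_{max}/\gamma^p_{min}$ of (\ref{eq:pnPottsLambda}); and simultaneously to separate the pure clique inequality of the lemma from the unary terms that (\ref{eq:multiplicativeBound}) couples to it, since the lemma asserts a clique-only estimate. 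Verifying that this bookkeeping closes the induction with the single constant $\left(\frac{r}{r-1}\right)\min(\mathcal{M},|\mathcal{L}|)$ is the crux of the argument.
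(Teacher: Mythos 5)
Your skeleton agrees with the paper's proof in its three structural ingredients: induction from the leaves of the r-{\sc hst} upward, the split of $\mathcal{I}^p$ into cliques interior to a single child plus straddling (boundary) cliques, and the geometric estimate $\delta^{dia}(\Gamma({\bf x}_c^*)) \geq 2\ell_p$ versus $\delta^{dia}(\mathcal{L}^p) \leq 2\ell_p\left(\tfrac{r}{r-1}\right)$ for a clique whose optimal labels straddle two children. The genuine gap is exactly the step you flag at the end, and it cannot be repaired while keeping your key tool, namely invoking the multiplicative bound (\ref{eq:multiplicativeBound})--(\ref{eq:pnPottsLambda}) of $\alpha$-expansion as a black box on the fusion energy (\ref{eq:pnFusion}). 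Two things go wrong. First, that bound's constant is $\lambda\min(\mathcal{M},|\mathcal{L}|)$ with $\lambda=\gamma^{max}/\gamma^{min}$, and in the fusion instance (\ref{eq:pnPottsFusion}) the values $\gamma_k^p=\delta^{dia}(\Gamma({\bf x}_c^{\eta(p,k)}))$ are frequently zero (any child whose labeling is constant on $c$, e.g.\ every leaf child), so by (\ref{eq:pnPottsLambda}) $\lambda$ degenerates to the absolute number $\gamma^p_{max}$; there is no ``effective $\lambda=\tfrac{r}{r-1}$.'' Second, even with a benign $\lambda$, the black-box bound multiplies the \emph{entire} reference cost of $\tilde{\bf t}^p$, which your induction hypothesis has already inflated by $\left(\tfrac{r}{r-1}\right)\min(\mathcal{M},|\mathcal{L}|)$ on the interior cliques, so the constant compounds geometrically with the depth of the tree, as you yourself observe. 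The root cause is your choice of comparison labeling: $\tilde{\bf t}^p$ reroutes \emph{every} variable to its optimal child simultaneously, hence it is in general many expansion moves away from $\hat{\bf t}^p$, and nothing weaker than the black-box bound relates the two.

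The paper closes this gap by never citing the $P^n$ Potts bound at all; it uses the local optimality of ${\bf x}^p$ against \emph{one hybrid move per child}. For each $q\in\eta(p)$ define ${\bf x}^{pq}$ by $x_i^{pq}=x_i^q$ if $x_i^*\in\mathcal{L}^q$ and $x_i^{pq}=x_i^p$ otherwise. This ${\bf x}^{pq}$ is a single expansion move away from ${\bf x}^p$ (expand onto child $q$'s labeling), so termination of $\alpha$-expansion gives $E({\bf x}^p)\leq E({\bf x}^{pq})$ with \emph{no} multiplicative factor; all cliques untouched by the move cancel from both sides, leaving an inequality over $\mathcal{I}^q\cup\mathcal{B}^q$ only. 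On $\mathcal{I}^q$ the hybrid coincides with ${\bf x}^q$, so the induction hypothesis bounds it against ${\bf x}^*$ directly, without any fresh factor; on $\mathcal{B}^q$ your geometric estimate bounds it against ${\bf x}^*$ with the single factor $\tfrac{r}{r-1}$. Summing these per-child inequalities over $q\in\eta(p)$, the factor $\min(\mathcal{M},|\mathcal{L}|)$ enters exactly once, because a straddling clique can be counted in at most $\min(|\eta(p)|,|{\bf x}_c|)$ of the sets $\mathcal{B}^q$, and $\mathcal{I}^p=\bigl(\cup_q\mathcal{I}^q\bigr)\cup\bigl(\cup_q\mathcal{B}^q\bigr)$. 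This is precisely the ``charge each clique once, at the node where its optimal label set first splits'' accounting you hoped for, but it is realized by unrolling the expansion-move argument into per-child hybrid comparisons rather than by reusing the $\alpha$-expansion guarantee as a black box; the latter has no mechanism for charging cliques individually.
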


\begin{proof}
We prove the above lemma by mathematical induction. Clearly, when $p$ is a leaf node, $x_i = p, \forall i \in \mathcal{V}$. For a non-leaf node $p$, we assume that the lemma holds true for the labeling ${\bf x}^q$ of all its children $q$. Given the labeling ${\bf x}^p$ and ${\bf x}^q$, we define a new labeling ${\bf x}^{pq}$ such that
\begin{eqnarray}
\label{eq:alphaExpRhst}
     {\bf x}^{pq}
      =
      \Big\{
      \begin{array}{c}
      x_i^q \textit{   if  } x_i^* \in \mathcal{L}^q \\
      x_i^p \textit{ otherwise.}
       \end{array}
\end{eqnarray}
Note that ${\bf x}^{pq}$ lies within one $\alpha$-expansion iteration away from ${\bf x}^{p}$. Since ${\bf x}^p$ is the local minima, we can say that
\begin{eqnarray}
E({\bf x}^{p} | \mathcal{I}^p) + E({\bf x}^{p} | \mathcal{B}^p) + E({\bf x}^{p} | \mathcal{O}^p) &\leq& E({\bf x}^{pq} | \mathcal{I}^{pq}) + E({\bf x}^{pq} | \mathcal{B}^{pq}) + E({\bf x}^{pq} | \mathcal{O}^{pq}) \nonumber \\
\label{eq:rhstIneq1}
E({\bf x}^{p} | \mathcal{I}^p) + E({\bf x}^{p} | \mathcal{B}^p) &\leq& E({\bf x}^{pq} | \mathcal{I}^{pq}) + E({\bf x}^{pq} | \mathcal{B}^{pq})  \\
\sum_{c \in \mathcal{I}^p} \delta^{dia}(\Gamma({\bf x}_c^p)) + \sum_{c \in \mathcal{B}^p} \delta(\Gamma({\bf x}_c^p)) &\leq& \sum_{c \in \mathcal{I}^{pq}} \delta^{dia}(\Gamma({\bf x}_c^{pq})) + \sum_{c \in \mathcal{B}^{pq}} \delta^{dia}(\Gamma({\bf x}_c^{pq})) \\
\sum_{c \in \mathcal{I}^q} \delta^{dia}(\Gamma({\bf x}_c^p)) + \sum_{c \in \mathcal{B}^q} \delta^{dia}(\Gamma({\bf x}_c^p)) &\leq& \sum_{c \in \mathcal{I}^{q}} \delta^{dia}(\Gamma({\bf x}_c^{pq})) + \sum_{c \in \mathcal{B}^{q}} \delta^{dia}(\Gamma({\bf x}_c^{pq})) 
\end{eqnarray}
 Using the mathematical induction we can write 
\begin{eqnarray}
\label{ineq:rhstIneqAllq}
\sum_{c \in \mathcal{I}^q} \delta^{dia}(\Gamma({\bf x}_c^p)) + \sum_{c \in \mathcal{B}^q} \delta^{dia}(\Gamma({\bf x}_c^p)) &\leq& \min(\mathcal{M}, |\mathcal{L}|)\left( \frac{r}{r-1} \right) \sum_{c \in \mathcal{I}^{q}} \delta^{dia}(\Gamma({\bf x}_c^*)) + \sum_{c \in \mathcal{B}^{q}} \delta^{dia}(\Gamma({\bf x}_c^{pq})) \nonumber \\
\end{eqnarray}
Now consider a clique $c \in \mathcal{B}^q$. Let $e^p$ be the length of edges from node $p$ to its children $q$. Since $c \in \mathcal{B}^q$, there must exist atleast two nodes $x_i$ and $x_j$ in ${\bf x}_c$ such that $x_i^* \in \mathcal{L}^q$ and $x_j^* \notin \mathcal{L}^q$, therefore, by construction of r-{\sc hst}
\begin{equation}
\delta^{dia}(\Gamma({\bf x}_c^*)) \geq 2 e^p
\end{equation}

Furthermore, by the construction of ${\bf x}^{pq}$, $\mathcal{L}^{pq} \subseteq \mathcal{L}^p$, therefore, in worst case (leaf nodes), we can write
\begin{eqnarray}
\delta^{dia}(\Gamma({\bf x}_c^{pq})) = max_{l_i, l_j \in \mathcal{L}^{pq}} d^t(l_i, l_j) &\leq& 2e^p \left( 1 + \frac{1}{r} + \frac{1}{r^2} + \cdots \right) \nonumber \\
&=& 2e^p \left( \frac{r}{r-1} \right) \nonumber \\
\label{ineq:pqVsOptimal}
&\leq& \delta^{dia}(\Gamma({\bf x}_c^*)) \left( \frac{r}{r-1} \right)
\end{eqnarray} 
From inequalities (\ref{ineq:rhstIneqAllq}) and (\ref{ineq:pqVsOptimal})
\begin{eqnarray}
\label{ineq:qFinal}
\sum_{c \in \mathcal{I}^q} \delta^{dia}(\Gamma({\bf x}_c^p)) + \sum_{c \in \mathcal{B}^q} \delta^{dia}(\Gamma({\bf x}_c^p)) \leq && \min(\mathcal{M}, |\mathcal{L}|) \left( \frac{r}{r-1} \right) \sum_{c \in \mathcal{I}^{q}} \delta^{dia}(\Gamma({\bf x}_c^*)) + \nonumber \\
&& \left( \frac{r}{r-1} \right) \sum_{c \in \mathcal{B}^{q}} \delta^{dia}(\Gamma({\bf x}_c^*))  \nonumber \\
\end{eqnarray}
In order to get the bound over the total energy we sum over all the children $q$ of $p$, denoted as $\eta(p)$. Therefore, summing the inequality (\ref{ineq:qFinal}) over $\eta(p)$ we get

\begin{eqnarray}
\label{ineq:qFinalAll}
\sum_{q \in \eta(p)} \sum_{c \in \mathcal{I}^q} \delta^{dia}(\Gamma({\bf x}_c^p)) +  \sum_{q \in \eta(p)} \sum_{c \in \mathcal{B}^q} \delta^{dia}(\Gamma({\bf x}_c^p)) &\leq&  
\min(\mathcal{M}, |\mathcal{L}|)\left( \frac{r}{r-1} \right) \sum_{q \in \eta(p)} \sum_{c \in \mathcal{I}^{q}} \delta^{dia}(\Gamma({\bf x}_c^*)) \nonumber \\
&+& \left( \frac{r}{r-1} \right) \sum_{q \in \eta(p)} \sum_{c \in \mathcal{B}^{q}} \delta^{dia}(\Gamma({\bf x}_c^*))  
\end{eqnarray}

The LHS of the above inequality can be written as
\begin{eqnarray}
\label{ineq:rhstLHSfinal}
\sum_{q \in \eta(p)} \sum_{c \in \mathcal{I}^q} \delta^{dia}(\Gamma({\bf x}_c^p)) + \sum_{q \in \eta(p)} \sum_{c \in \mathcal{B}^q} \delta^{dia}(\Gamma({\bf x}_c^p)) &\geq& \sum_{c \in \cup_{q \in \eta(p)} \mathcal{I}^q } \delta^{dia}(\Gamma({\bf x}_c^p)) + \sum_{c \in \cup_{q \in \eta(p)} \mathcal{B}^q } \delta^{dia}(\Gamma({\bf x}_c^p)) \nonumber \\
&=& \sum_{c \in \mathcal{I}^p} \delta^{dia}(\Gamma({\bf x}_c^p))
\end{eqnarray}
The above inequality and equality is due to the fact that $\cap_{q \in \eta(p) } \mathcal{I}^{q} = \emptyset$, $\cap_{q \in \eta(p) } \mathcal{B}^{q}$ is not necessarily an empty set, $\delta^{dia}(\Gamma({\bf x}_c)) \geq 0$, and $\mathcal{I}^p = \{ \cup_{q \in \eta(p)} \mathcal{I}^q \} \cup \{ \cup_{q \in \eta(p)}\mathcal{B}^q \}$. Now let us have a look into the second term of the RHS of the inequality (\ref{ineq:qFinalAll})

\begin{eqnarray}
\label{ineq:rhstML1}
\sum_{q \in \eta(p)} \sum_{c \in \mathcal{B}^q} \delta^{dia}(\Gamma({\bf x}_c^*)) &\leq& \sum_{c \in \cup_{q \in \eta(p)} \mathcal{B}^{q}} \min(|\eta(p)|, |{\bf x}_c|) \delta^{dia}(\Gamma({\bf x}_c^*)) \\
&\leq& \min \left( \max_{p \in \eta(p)} |\eta(q)|, \max_c |{\bf x}_c| \right) \sum_{c \in \cup_{q \in \eta(p)} \mathcal{B}^{q}}  \delta^{dia}(\Gamma({\bf x}_c^*)) \nonumber \\
\label{ineq:rhstML2}
&=& \min (\mathcal{L}, |\mathcal{M}|) \sum_{c \in \cup_{q \in \eta(p)} \mathcal{B}^{q}}  \delta^{dia}(\Gamma({\bf x}_c^*)) 
\end{eqnarray}
The inequality (\ref{ineq:rhstML1}) is due to the fact that $\cup_{q \in \eta(p)} \mathcal{B}^{q}$ can not count a clique more than $\min(|\eta(p)|, |{\bf x}_c|)$ times. Therefore, using the inequality (\ref{ineq:rhstML2}) in the RHS of the inequality (\ref{ineq:rhstLHSfinal}) we get

\begin{eqnarray}
&& \min(\mathcal{M}, |\mathcal{L}|)\left( \frac{r}{r-1} \right) \sum_{q \in \eta(p)} \sum_{c \in \mathcal{I}^{q}} \delta^{dia}(\Gamma({\bf x}_c^*)) + \left( \frac{r}{r-1} \right) \sum_{q \in \eta(p)}  \sum_{c \in \mathcal{B}^{q}} \delta^{dia}(\Gamma({\bf x}_c^*))  \nonumber \\
&\leq& \min(\mathcal{M}, |\mathcal{L}|)\left( \frac{r}{r-1} \right) \left( \sum_{c \in \cup_{q \in \eta(p)} \mathcal{I}^q} \delta^{dia}(\Gamma({\bf x}_c^*)) + \sum_{c \in \cup_{q \in \eta(p)} \mathcal{B}^q} \delta^{dia}(\Gamma({\bf x}_c^*)) \right) \nonumber \\
\label{ineq:rhstRHSfinal}
&=&\min(\mathcal{M}, |\mathcal{L}|)\left( \frac{r}{r-1} \right) \sum_{c \in \mathcal{I}^p} \delta^{dia}(\Gamma({\bf x}_c^*))
\end{eqnarray}

Finally, using inequalities (\ref{ineq:qFinalAll}), (\ref{ineq:rhstLHSfinal}) and (\ref{ineq:rhstRHSfinal}) we get
\begin{eqnarray}
\sum_{c \in \mathcal{I}^p} \delta^{dia}(\Gamma({\bf x}_c^p)) \leq \min(\mathcal{M}, |\mathcal{L}|) \left( \frac{r}{r-1} \right) \sum_{c \in \mathcal{I}^p} \delta^{dia}(\Gamma({\bf x}_c^*))
\end{eqnarray}

\end{proof}
Applying the above lemma to the root node proves the theorem.
\end{proof}

%\begin{algorithm}[tb]
%\caption{The Move Making Algorithm for the Hierarchical $P^n$ Potts Model.}
%\label{algo:hierPn}
%\begin{algorithmic}[1]
%\INPUT r-{\sc hst} Metric, $w_c, \forall c \in \mathcal{C}$, and $\theta_i(x_i), \forall i \in \mathcal{V}$
%\STATE $d = D$, the leaf nodes
%\REPEAT
%\FOR {each $p \in \mathcal{N}(d)$}
%\IF{$|\eta(p)| = 0$, leaf node}
%\STATE $x_i^p = p, \forall i \in \mathcal{V}$
%\ELSE
%\STATE Fusion Move
%\begin{eqnarray}
%\label{eq:fusionMove}
%\hat{\bf t}^p = \argmin_{{\bf t}^p \in \{1, \cdots, |\eta(p)| \}^N } E({\bf t}^p)
%\end{eqnarray}
%\STATE $x^p_i = x_i^{\eta(p,\hat{t}_i^p)}$.
%\ENDIF
%\ENDFOR
%
%\STATE $d \leftarrow d-1$
%\UNTIL{ $d \geq 1$}.
%\end{algorithmic}
%\end{algorithm}

\subsection{Multiplicative Bound of the Algorithm-\ref{algo:parsimonyMoveMaking} for the Parsimonious Labeling - Proof of Theorem-\ref{theo:diversityMB}}
%\begin{theorem}
%\label{theo:diversityMB}
%The move making algorithm defined in Algorithm-\ref{algo:parsimonyMoveMaking} gives the multiplicative bound of $\left( \frac{r}{r-1} \right) (|\mathcal{L}|-1) (\log |\mathcal{L}|) \min(\mathcal{M},|\mathcal{L}|)$ for the {\em parsimonious labeling} problem. Here, $\mathcal{M}$ is the size of the largest maximal-clique and $|\mathcal{L}|$ is the number of labels.
%\end{theorem}
\begin{proof}
Let us say that $d(.,.)$ is the induced metric of the given diversity $(\delta, \mathcal{L})$ and $\delta^{dia}$ be it's diameter diversity. We first approximate $d(.,.)$ as a mixture of r-{\sc hst} metrics $d^t(.,.)$. Using Theorem-\ref{theo:treeMetric} we get the following relationship
\begin{eqnarray}
\label{eq:metricTotreeMetric}
d(.,.) \leq O(\log |\mathcal{L}|) d^t(.,.)
\end{eqnarray}

For a given clique ${\bf x}_c$, using Proposition-\ref{prop:divInequality}, we get the following relationship
\begin{eqnarray}
\label{eq:diaInequality}
\delta^{dia} (\Gamma({\bf x}_c)) \leq \delta(\Gamma({\bf x}_c)) \leq (|\Gamma({\bf x}_c)|-1)\delta^{dia} (\Gamma({\bf x}_c))
\end{eqnarray}
Therefore, using equations (\ref{eq:diaInequality}) and (\ref{eq:metricTotreeMetric}), we get the following inequality
\begin{eqnarray}
\label{eq:diaIneuqlityFinal}
\delta^{dia} (\Gamma({\bf x}_c)) \leq \delta(\Gamma({\bf x}_c)) &\leq& (|\Gamma({\bf x}_c)|-1)\delta^{dia} (\Gamma({\bf x}_c)) \nonumber \\
&\leq& O(\log |\Gamma({\bf x}_c)|) (|\Gamma({\bf x}_c)|-1)\delta^{dia}_t (\Gamma({\bf x}_c))
\end{eqnarray}
where, $\delta^{dia}_t (\Gamma({\bf x}_c))$ is the diameter diversity defined over the tree metric $d^t(.,.)$ which is obtained using the randomized algorithm \cite{Fakcharoenphol04TightBound} on the induced metric $d(.,.)$.

Hence, combing the inequality (\ref{eq:diaIneuqlityFinal}) and the previously proved Theorem-\ref{theorem:rhstMB} proves the Theorem-\ref{theo:diversityMB}.

Notice that, in case our diversity in itself is a diameter diversity, we don't need the inequality (\ref{eq:diaInequality}), therefore, the multiplicative bound reduces to $\left( \frac{r}{r-1} \right) (\log |\mathcal{L}|) \min(\mathcal{M},|\mathcal{L}|)$.
\end{proof}

%\begin{algorithm}[tb]
%\caption{The Move Making Algorithm for the Parsimonious Labeling Problem.}
%\label{algo:parsimonyMoveMaking}
%\begin{algorithmic}[1]
%\INPUT Diversity $(\mathcal{L}, \delta)$; $w_c, \forall c \in \mathcal{C}$; $\theta_i(x_i), \forall i \in \mathcal{V}$; $\mathcal{L}$; $k$
%\STATE Approximate the given diversity as the mixture of $k$ hierarchical $P^n$ Potts model using Algorithm-\ref{algo:divToHierPn}.
%\FOR {each hierarchical $P^n$ Potts model in the mixture}
%\STATE Use the hierarchical move making algorithm defined in the Algorithm-\ref{algo:hierPn}.
%\STATE Compute energy corresponding to the solution obtained.
%\ENDFOR
%\STATE Choose the solution with the minimum energy.
%\end{algorithmic}
%\end{algorithm}

%\begin{algorithm}[tb]
%\caption{Diversity to Mixture of Hierarchical $P^n$ Potts model.}
%\label{algo:divToHierPn}
%\begin{algorithmic}[1]
%\INPUT Diversity $(\mathcal{L}, \delta)$, $k$
%\STATE Compute the induced metric, $d(.)$, where $d(l_i, l_j) = \delta(\{l_i, l_j \}), \forall l_i, l_j \in \mathcal{L}$.
%\STATE Approximate $d(.)$ into mixture of $k$ r-{\sc hst} metrics $d^t(.)$ using the algorithm proposed in \cite{Fakcharoenphol04TightBound}.
%\FOR {each r-{\sc hst} metrics $d^t(.)$}
%\STATE Obtain the corresponding Hierarchical $P^n$ Potts model by defining the diameter diversity over $d^t(.)$
%\ENDFOR
%\end{algorithmic}
%\end{algorithm}

\begin{theorem}
\label{theo:treeMetric}
Given any distance metric function $d(.,.)$ defined over a set of labels $\mathcal{L}$, the randomized algorithm given in \cite{Fakcharoenphol04TightBound} produces a mixture of r{-\sc hst} tree metrics $d^t(.,.)$ such that $d(.,.) \leq O(\log |\mathcal{L}|) d^t(.,.)$.
\\ Proof: Please see the reference \cite{Fakcharoenphol04TightBound}.
\end{theorem}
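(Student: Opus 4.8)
The plan is to prove the classical probabilistic tree-embedding result of \cite{Fakcharoenphol04TightBound} by exhibiting the randomized hierarchical decomposition of $(\mathcal{L},d)$ that underlies the algorithm and then analysing its expected distortion. First I would normalise the metric so that the minimum interpoint distance is $1$ and the diameter is at most $r^{\delta}$ for an integer $\delta = \lceil \log_r \mathrm{diam}(\mathcal{L}) \rceil$. The random r-{\sc hst} is then built top-down in $\delta+1$ levels: draw a single radius scale $\beta$ uniformly on a logarithmic scale from $[1,r)$ and a uniformly random permutation $\pi$ of the labels, and at each level $i=\delta,\delta-1,\dots,0$ refine the current partition by assigning every label $u$ to the cluster of the first center $\pi(j)$, in permutation order, with $d(u,\pi(j)) \le \beta r^{i-1}$. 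Declaring the edge from a level-$i$ node to its level-$(i-1)$ children to have length proportional to $r^{i}$ makes the edge weights decrease by a factor $r$ along every root-to-leaf path, so the construction is a genuine r-{\sc hst}, and $d^t(\cdot,\cdot)$ is the induced shortest-path (tree) metric.

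There are two properties to establish: dominance and expected contraction. For dominance I would check that the edge lengths are chosen at least as large as the separating radius at each level, so that if $u$ and $v$ first fall into distinct level-$i$ clusters their tree distance is at least a constant times $r^{i}$, which is in turn at least $d(u,v)$; this gives $d(\cdot,\cdot) \le d^t(\cdot,\cdot)$ in every realisation and hence the stated $d(\cdot,\cdot) \le O(\log|\mathcal{L}|)\,d^t(\cdot,\cdot)$ for the mixture. The substantive claim is the expected upper bound $\mathbb{E}[d^t(u,v)] \le O(\log|\mathcal{L}|)\,d(u,v)$. Here I would write $\mathbb{E}[d^t(u,v)] = \sum_i \Theta(r^{i})\,\Pr[u,v \text{ separated at level } i]$ and bound each separation probability by a charging argument: order the labels $w_1,w_2,\dots$ of $\mathcal{L}$ by increasing distance $\min(d(w,u),d(w,v))$, and observe that $w_k$ can be the center that first cuts the pair at level $i$ only if its settling radius $\beta r^{i-1}$ lands in a narrow multiplicative window around the distances $d(w_k,u),d(w_k,v)$ and $w_k$ precedes $w_1,\dots,w_{k-1}$ in $\pi$.

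The crux, and the step I expect to be the main obstacle, is interleaving the two independent sources of randomness in this charging argument so that the geometric per-level factors cancel and only a harmonic factor survives. The log-uniform choice of $\beta$ makes the probability that a fixed center cuts the pair at level $i$ proportional to $d(u,v)/r^{i}$, which supplies the factor $d(u,v)$ after summation against the $\Theta(r^{i})$ weights; the random permutation makes the probability that $w_k$ is the \emph{first} eligible cutting center at most $1/k$, so that summing over $k$ yields $\sum_{k=1}^{|\mathcal{L}|} 1/k = O(\log|\mathcal{L}|)$. Verifying that these two estimates combine cleanly, rather than compounding, is the delicate part; once it is done, $\mathbb{E}[d^t(u,v)] \le O(\log|\mathcal{L}|)\,d(u,v)$ follows and, together with dominance, proves the theorem. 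Since this is exactly the result of \cite{Fakcharoenphol04TightBound}, I would either reproduce the analysis above or, as the authors do, simply cite the reference.
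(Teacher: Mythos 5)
Your proposal is correct, and it takes the same route as the paper: the paper's entire proof of this theorem is the citation to \cite{Fakcharoenphol04TightBound}, and your sketch is a faithful reconstruction of exactly that cited argument (the random-permutation plus log-uniform-radius hierarchical decomposition, dominance in every realisation, and the expected-stretch bound via the $d(u,v)/r^{i}$ cutting probability combined with the $1/k$ first-center probability to give the harmonic sum $O(\log|\mathcal{L}|)$). Since the content coincides with the reference the paper invokes, there is nothing further to reconcile.
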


\begin{proposition}
\label{prop:divInequality}
Let $(\mathcal{L}, \delta)$ be a diversity with induced metric space $(\mathcal{L}, d)$, then the following inequality holds $\forall \Gamma \subseteq \mathcal{L}$.
\begin{eqnarray}
\delta^{dia} (\Gamma) \leq \delta(\Gamma) \leq (|\Gamma|-1)\delta^{dia} (\Gamma)
\label{eq:divInequality}
\end{eqnarray}
Proof: Please see the reference \cite{Bryant13Diversities}.
\end{proposition}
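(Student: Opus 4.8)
The plan is to prove the two inequalities in (\ref{eq:divInequality}) separately, using only the three defining axioms of a diversity (non-negativity, the triangle inequality, and monotonicity) together with the definition $\delta^{dia}(\Gamma) = \max_{l_i,l_j \in \Gamma} \delta(\{l_i,l_j\})$. The case $|\Gamma| \leq 1$ is trivial, since both $\delta(\Gamma)$ and $\delta^{dia}(\Gamma)$ vanish by non-negativity, so I would assume $|\Gamma| \geq 2$ throughout.

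For the lower bound $\delta^{dia}(\Gamma) \leq \delta(\Gamma)$, I would let $\{l_a,l_b\}$ be a pair in $\Gamma$ attaining the diameter, so that $\delta^{dia}(\Gamma) = \delta(\{l_a,l_b\})$. Since $\{l_a,l_b\} \subseteq \Gamma$, monotonicity immediately gives $\delta(\{l_a,l_b\}) \leq \delta(\Gamma)$, which is exactly the claim. This direction is a one-line consequence of monotonicity and needs nothing else.

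For the upper bound $\delta(\Gamma) \leq (|\Gamma|-1)\,\delta^{dia}(\Gamma)$, I would induct on $n = |\Gamma|$. The base case $n=2$ is an equality, since both sides equal $d(l_1,l_2)$. For the inductive step, write $\Gamma = \{l_1,\ldots,l_n\}$ and apply the triangle inequality with $\Gamma_1 = \{l_1,\ldots,l_{n-1}\}$, $\Gamma_2 = \{l_{n-1}\}$, and $\Gamma_3 = \{l_n\}$; here $\Gamma_2 \neq \emptyset$ as required, and the three unions are $\{l_1,\ldots,l_{n-1}\}$, $\{l_{n-1},l_n\}$, and $\Gamma$, respectively. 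This yields $\delta(\{l_1,\ldots,l_{n-1}\}) + \delta(\{l_{n-1},l_n\}) \geq \delta(\Gamma)$. The second term equals $d(l_{n-1},l_n) \leq \delta^{dia}(\Gamma)$ by definition of the diameter, while the first term is at most $(n-2)\,\delta^{dia}(\{l_1,\ldots,l_{n-1}\}) \leq (n-2)\,\delta^{dia}(\Gamma)$ by the inductive hypothesis together with the monotonicity of $\delta^{dia}$ under inclusion (a maximum over a subset cannot exceed the maximum over the full set). Summing these two bounds gives $\delta(\Gamma) \leq (n-1)\,\delta^{dia}(\Gamma)$, closing the induction.

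The step I would treat most carefully — and the only genuinely nontrivial point — is the choice of decomposition $(\Gamma_1,\Gamma_2,\Gamma_3)$ in the triangle inequality. The trick is to keep the ``bridge'' label $l_{n-1}$ in both remaining pieces so that the inequality peels off a single label at a time while accumulating exactly one diameter term per step; a careless split would either violate $\Gamma_2 \neq \emptyset$ or fail to telescope to precisely $n-1$ copies of $\delta^{dia}(\Gamma)$. Everything after that choice is routine bookkeeping, so I expect the whole argument to be short and self-contained, matching the reference \cite{Bryant13Diversities}.
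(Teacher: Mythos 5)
Your proof is correct, and it takes a genuinely different route from the paper for a simple reason: the paper does not prove Proposition~\ref{prop:divInequality} at all, it merely defers to the reference \cite{Bryant13Diversities}. Your argument is self-contained and uses only the three diversity axioms plus the definition of $\delta^{dia}$. Both halves check out. The lower bound is, as you say, immediate: a maximizing pair $\{l_a,l_b\}\subseteq\Gamma$ gives $\delta^{dia}(\Gamma)=\delta(\{l_a,l_b\})\leq\delta(\Gamma)$ by monotonicity. For the upper bound, your choice $\Gamma_1=\{l_1,\ldots,l_{n-1}\}$, $\Gamma_2=\{l_{n-1}\}$, $\Gamma_3=\{l_n\}$ satisfies the axiom's hypothesis $\Gamma_2\neq\emptyset$, the three unions are exactly $\{l_1,\ldots,l_{n-1}\}$, $\{l_{n-1},l_n\}$, and $\Gamma$, and the two resulting terms are bounded by $(n-2)\,\delta^{dia}(\Gamma)$ (inductive hypothesis plus monotonicity of the max under inclusion) and $\delta^{dia}(\Gamma)$ respectively, which telescopes to $(n-1)\,\delta^{dia}(\Gamma)$ as claimed; the base case $n=2$ is an equality and the degenerate case $|\Gamma|\leq 1$ vanishes on both sides by the non-negativity axiom. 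What your approach buys is verifiability within the paper itself: the multiplicative bound of Theorem~\ref{theo:diversityMB} leans directly on inequality (\ref{eq:divInequality}), so having an elementary in-house proof (rather than an external citation) makes the chain of guarantees fully auditable from the stated axioms. One cosmetic remark: you write the diameter as $\max_{l_i,l_j\in\Gamma}\delta(\{l_i,l_j\})$, which is the same as the paper's $\max_{l_i,l_j\in\Gamma} d(l_i,l_j)$ via the induced metric $d(l_i,l_j)=\delta(\{l_i,l_j\})$, so there is no discrepancy, but it is worth stating that identification explicitly since the proposition is phrased in terms of $(\mathcal{L},d)$.
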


%\begin{proof}
%The proof is simple. Let $\{ a_1, a_2, a_3, a_4\}$ be the elements of A, then using triangular inequality repeatedly:
%\begin{eqnarray}
%\delta(A) &\leq& \delta({a_1, a_2, a_3}) + \delta({a_3, a_4}) \nonumber \\
%&\leq& \delta({a_1, a_2}) + \delta({ a_2, a_3})  + \delta({a_3, a_4}) \nonumber \\
%&\leq& 3 \delta_{dia} (A). \nonumber 
%\end{eqnarray}
%\end{proof}

\twocolumn

{\small
\bibliographystyle{ieee}
\bibliography{dokaniaBibliography}
}
\end{document}